\def\eqref#1{equation~\ref{#1}}
\def\1{\bm{1}}
\def\rvb{{\mathbf{b}}}
\def\rmW{{\mathbf{W}}}
\DeclareMathAlphabet{\mathsfit}{\encodingdefault}{\sfdefault}{m}{sl}
\SetMathAlphabet{\mathsfit}{bold}{\encodingdefault}{\sfdefault}{bx}{n}
\def\sC{{\mathbb{C}}}
\def\sD{{\mathbb{D}}}
\def\sI{{\mathbb{I}}}
\def\sL{{\mathbb{L}}}
\def\sQ{{\mathbb{Q}}}
\def\sS{{\mathbb{S}}}
\def\sT{{\mathbb{T}}}
\newcommand{\E}{\mathbb{E}}
\newtheorem{prop}{Proposition}
\newtheorem{thm}{Theorem}
\begin{document}
\pagestyle{headings}
\mainmatter
\def\ECCVSubNumber{3122}  

\title{Adaptive Task Sampling for Meta-Learning \thanks{The first two authors contributed equally, and completed most of this work when working at the School of Information Systems, Singapore Management University (SMU). Steven C.H. Hoi is currently with Salesforce Research Asia and on leave from SMU.}} 


%
\author{Chenghao Liu$^{1}$ \quad Zhihao Wang$^{2}$ \quad Doyen Sahoo$^{3}$ \quad Yuan Fang$^{1}$ \\ Kun Zhang$^{4}$ \quad Steven C.H. Hoi$^{1,3}$}
\authorrunning{C.H. Liu et al.}
\institute{Singapore Management University$^{1}$ \quad South China University of Technology$^{2}$ \\ Salesforce Research Asia$^{3}$\quad Carnegie Mellon University$^{4}$\\
\email{\{chliu, yfang\}@smu.edu.sg, ptkin@outlook.com, \{dsahoo,shoi\}@salesforce.com, kunz1@cmu.edu}}

\maketitle

\begin{abstract}
Meta-learning methods have been extensively studied and applied in computer vision, especially for few-shot classification tasks. 
The key idea of meta-learning for few-shot classification is to mimic the few-shot situations faced at test time by randomly sampling classes in meta-training data to construct few-shot tasks for episodic training. While a rich line of work focuses solely on how to extract meta-knowledge across tasks, we exploit the complementary problem on how to generate informative tasks.
We argue that the randomly sampled tasks could be sub-optimal  and uninformative (e.g., the task of classifying ``dog" from ``laptop" is often trivial) to the meta-learner. In this paper, we propose an adaptive task sampling method to improve the generalization performance. Unlike instance based sampling, task based sampling is much more challenging due to the implicit definition of the task in each episode. Therefore, we accordingly propose a greedy class-pair based sampling method, which selects difficult tasks according to class-pair potentials. We evaluate our adaptive task sampling method on two few-shot classification benchmarks, and it achieves consistent improvements across different feature backbones, meta-learning algorithms and datasets.
\end{abstract}

\section{Introduction}
Deep neural networks have achieved great performance in areas such as image recognition \cite{he2016deep}, machine translation \cite{cho2014learning} and speech synthesis \cite{ze2013statistical} when large amounts of labelled data are available. In stark contrast, human intelligence naturally possesses the ability to leverage prior knowledge and quickly learn new concepts from only a handful of samples. Such fast adaptation is made possible by some fundamental structures in human brains such as the ``shape bias'' to learn the learning procedure \cite{landau1988importance}, which is also known as \textit{meta-learning}. The fact that deep neural networks fail in the small data regime formulates a desirable problem for understanding intelligence. In particular, leveraging meta-learning algorithms to solve few-shot learning problems \cite{lake2015human,ravi2016optimization} has recently gained much attention, which aims to close the gap between human and machine intelligence by training deep neural networks that can generalize well from very few labelled samples. In this setup, meta-learning is formulated as the extraction of cross-task knowledge that can facilitate the quick acquisition of task-specific knowledge from new tasks.   

In order to compensate for the scarcity of training data in few-shot classification tasks, meta-learning approaches rely on an \textit{episodic training} paradigm. A series of few-shot tasks are sampled from meta-training data for the extraction of transferable knowledge across tasks, which is then applied to new few-shot classification tasks consisting of unseen classes during the meta-testing phase. Specifically, optimization-based meta-learning approaches \cite{sun2019meta,finn2017model} aim to find a global set of model parameters that can be quickly and effectively fine-tuned for each individual task with just a few gradient descent update steps. Meanwhile, metric-based meta-learning approaches \cite{sung2018learning,oreshkin2018tadam} learn a shared distance metric across tasks.

Despite their noticeable improvements, these meta-learning approaches leverage uniform sampling over classes to generate few-shot tasks, which ignores the intrinsic relationships between classes when forming episodes. We argue that exploiting class structures to construct more informative tasks is critical in meta-learning, which improves its ability to adapt to novel classes. 
For example, in the midst of the training procedure, a randomly sampled task of classifying dogs from laptops may have little effect on the model update due to its simpleness. Furthermore, in the conventional classification problem,  prioritizing challenging training examples \cite{shrivastava2016training,shalev2016minimizing} to improve the generalization performance has been widely used in various fields, ranging from AdaBoost \cite{freund1999short} that selects harder examples to train subsequent classifiers, to Focal Loss \cite{lin2017focal} that adds a soft weighting scheme to emphasize harder examples.

A natural question thus arises: Can we perform adaptive task sampling and create more difficult tasks for meta-learning? Compared to the traditional instance-based adaptive sampling scheme, one key challenge in task sampling is to define the difficulty of a task. A na\"ive solution is to choose the difficult classes since each task is constructed by multiple classes. However, the difficulty of a class, and even the semantics of a class, is dependent on each other. For instance, the characteristics to discriminate ``dog'' from ``laptop'' or ``car'' are relatively easier to uncover than those for discriminating ``dog'' from ``cat'' or ``tiger''. In other words, the difficulty of a task goes beyond the difficulty of individual classes,
and adaptive task sampling should consider the intricate relationships between different classes.

In this work, we propose a \textbf{class-pair based} adaptive task sampling method for meta-learning with several appealing qualities. First, it determines the task selection distribution by computing the difficulty of all class-pairs in it. As a result, it could capture the complex-structured relationships between classes in a multi-class few-shot classification problem. Second, since the cost of computing the task selection distribution for $K$-way classification problem 
is ($|\sC_{tr}|$ choose $K$) or $O(|\sC_{tr}|^K)$, where $|\sC_{tr}|$ is the number of classes in the meta-training data, we further propose a \textbf{greedy class-pair based} adaptive task sampling method which only requires $O(K)$ time. Meanwhile, it can be formally established that the proposed greedy approach in fact samples from a distribution that is identical to that in the non-greedy version. Lastly, our method could be applied to any meta-learning algorithms that follow episodic training and works well with different feature backbones.

In summary, our work makes the following contributions. \textbf{(1)} We propose a class-pair based adaptive task sampling approach for meta-learning methods, to improve the generalization performance on unseen tasks.
\textbf{(2)} We further develop a greedy class-pair based approach that not only significantly reduces the complexity of task distribution computation, but also guarantees the generation of an identical distribution as that in the non-greedy approach.  \textbf{(3)} We study the impact of the adaptive task sampling method by integrating it with various meta-learning approaches and performing comprehensive experiments on the miniImageNet and CIFAR-FS few-shot datasets, which quantitatively demonstrates the superior performance of our method. 
\textbf{(4)} We also conduct an extensive investigation of different sampling strategies, including class-based method, easy class-pair based method and uncertain class-pair based method. The results show that hard class-pair based sampling consistently leads to more accurate results.
 
\section{Related Work}
\noindent\textbf{Meta-learning:} The original idea of meta-learning, training a meta-model to learn a base model, has existed for at least 20 years \cite{thrun1998learning,naik1992meta}. Recently, the meta-learning framework has been used to solve few-shot classification problems. One typical work is the optimization based method. \cite{ravi2016optimization} uses the LSTM-based meta-learner to replace the SGD optimizer in the base model. MAML \cite{finn2017model} and its variants \cite{li2017meta,antoniou2018train} aim to learn a good model initialization so that the model for new tasks can be learned with a small number of samples and gradient update steps.
Another category of work is the metric based method. It learns a set of embedding functions such that when represented in this space, images are easy to be recognized using a non-parametric model like nearest neighbor  \cite{vinyals2016matching,snell2017prototypical,oreshkin2018tadam}. All of these methods follow the uniform sampling scheme to generate tasks at each episode. Besides, \cite{sun2019meta} considers a heuristic sampling method, which uses memory to store all the failure classes from $k$ continuous tasks, and then constructs a hard task from them. 
\cite{triantafillou2019meta,liu2019learning}  utilize pre-defined class structure information to construct tasks in both meta-training and meta-testing phases. In this way, the experiment setting could more closely resemble realistic scenarios. In contrast, our work, inspired by importance sampling in stochastic optimization, aims to adaptively update task generating distribution in the meta-training phase, and this, in turn, improves its ability to adapt to novel classes
with few training data in the meta-testing phase. We also present a theoretical analysis of the generalization bound to justify our approach.

\vspace{1.5mm}
\noindent\textbf{Adaptive Sampling:} Instance-based sampling is ubiquitous in stochastic optimization. Generally, it constantly reevaluates the relative importance of each instance during training. The most common paradigm is to calculate the importance of each instance based on the gradient norm \cite{alain2015variance}, bound on the gradient norm \cite{katharopoulos2017biased}, loss \cite{loshchilov2015online}, approximate loss \cite{katharopoulos2018not} or prediction probability \cite{chang2017active}. One typical line of research work is to leverage adaptive sampling for fast convergence \cite{zhao2015stochastic,allen2016even}. Researchers also consider improving the generalization performance rather than speeding up training \cite{london2017pac}. Specifically, \cite{bengio2009curriculum} considers instances that increase difficulty. Hard example mining methods also prioritize challenging training examples \cite{shrivastava2016training,lin2017focal}. Some other researchers prioritize uncertain examples that are close to the model's decision boundary \cite{chang2017active,song2018ada}. In this work, we also evaluate easy sampling and uncertain sampling at the task level, but experimental results show that hard sampling performs better. There also exists work for sampling mini-batches instead of a single instance \cite{csiba2018importance,horvath2018nonconvex}. \cite{zhang2017determinantal,zhang2019active} consider sampling diverse mini-batches via the repulsive point process. Nonetheless, these methods are not designed for meta-learning and few-shot learning. 

\section{Preliminaries}
In this section, we review the \textit{episodic training} paradigm in meta-learning and the vanilla instance-based adaptive sampling method for SGD.
\subsection{Episodic Training}
In the meta-learning problem setting, the goal is to learn models that can learn new tasks from small amounts of data. Formally, we have a large meta-training dataset $\sD_{tr}$ (typically containing a large number of classes) and a meta-test dataset $\sD_{test}$, in which their respective category sets $\sC_{tr}=\{1,\dots,|\sC_{tr}|\}$  and $\sC_{test}=\{|\sC_{tr}|+1,\dots,|\sC_{tr}|+\sC_{test}\}$ are disjoint. We aim to learn a classification model on $\sD_{tr}$ that can generalize to unseen categories $\sC_{test}$ with one or few training examples per category.

The success of existing meta-learning approaches relies on the \textit{episodic training} paradigm \cite{vinyals2016matching}, which mimics the few-shot regime faced at test time during training on $\sD_{tr}$. Particularly, meta-learning algorithms learn from a collection of $K$-way-$M$-shot classification tasks sampled from the amply labelled set $\sD_{tr}$ and are evaluated in a similar way on $\sD_{test}$. In each episode of meta-training, we first sample $K$ classes $\sL^K\sim\sC_{tr}$. Then, we sample $M$ and $N$ labelled images per class in $\sL^K$ to construct the support set $\sS=\{(s_m,y_m)_m\}$ and query set $\sQ=\{(q_n,y_n)_n\}$, respectively. The episodic training for few-shot learning is achieved by minimizing, for each episode, the loss of the prediction for each sample in the query set, given the support set. The model is parameterized by $\theta$ and the loss is the negative loglikelihood of the true class of each query sample:
\begin{align}
\ell(\theta)=\underset{(S,Q)}{\E}[-\sum_{(q_n,y_n)\in Q}\log p_\theta(y_n|q_n,S)],
\end{align}
where $p_\theta(y_n|q_n,S)$ is the classification probability based on the support set. The model then back-propagates the gradient of the total loss $\nabla \ell(\theta)$. 
Different meta-learning approaches differ in the manner in which this conditioning on the support set is realized. To better explain how it works,
we show its framework in Figure \ref{episode}.

\begin{figure*}[!tb]
\centering
\includegraphics[width=0.9\textwidth]{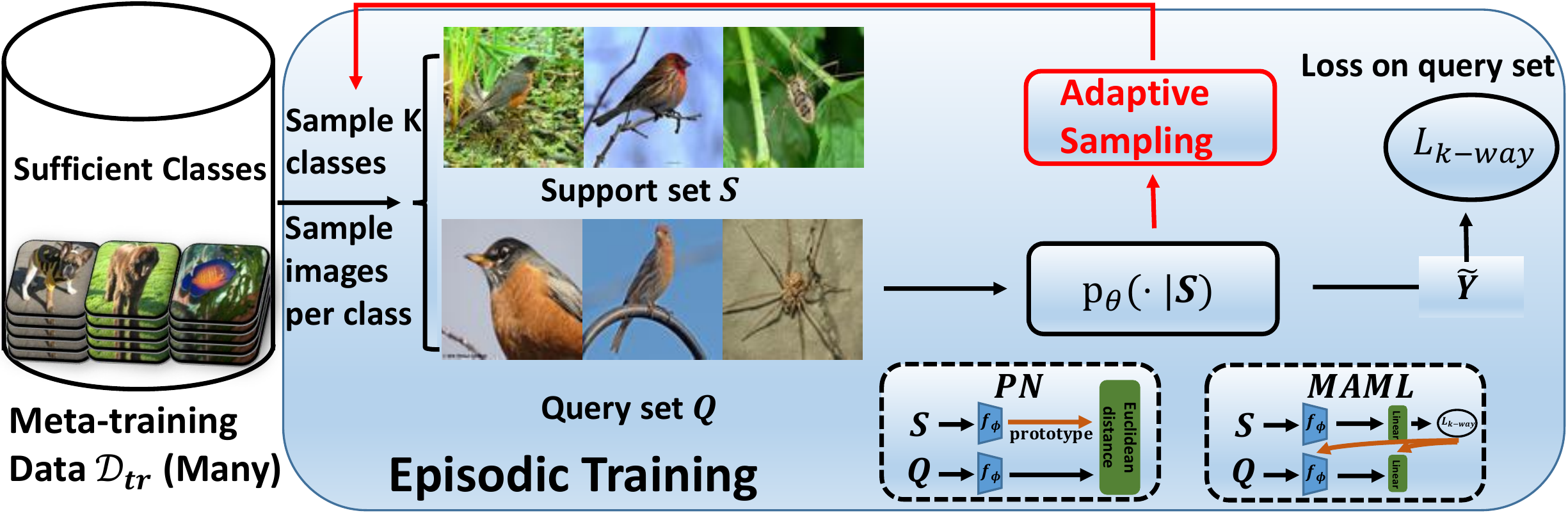}
\vspace{0.1in}
\caption{The episodic training paradigm for meta-learning few-shot classification.}
\label{episode}
\end{figure*}

\subsection{Instance-base Adaptive Sampling for SGD}
Let $\sD=\{(x_i,y_i)_i\}$ indicate the training dataset. The probability of selecting each sample is equal at the initial stage (i.e., $p_0(i|\sD)=\frac{1}{|\sD|}$). To emphasize difficult examples while applying SGD, we adaptively update the selection probability $p^{t+1}(i)$ for instance $i$ at iteration $t+1$ according to the current prediction probability $p(y_i|x_i)$ and the selection probability at previous iteration $p^{t}(i)$, 
\begin{align}
p^{t+1}(i) \propto (p^{t}(i))^\tau e^{\alpha(1-p(y_i|x_i))},
\end{align}  
where the hyperparameters $\tau$ is a discounting parameter and $\alpha$ scales the influence of current prediction. This multiplicative update method has a close relation to maximum loss minimization \cite{shalev2016minimizing} and AdaBoost \cite{freund1997decision}, which can result in improved generalization performance, especially when only a few ``rare'' samples exist. Moreover, when the gradient update is weighted by the inverse sampling probability, we obtain an unbiased gradient estimation that improves the convergence by reducing its variance \cite{zhao2015stochastic,gopal2016adaptive}.


\if 0
\subsection{Meta-Learning Algorithms}
\noindent\textbf{Prototypical Network (PN) \cite{snell2017prototypical}.}   Prototypical Network is a typical metric-based meta-learning method. It constructs a prototype for each class and then classifies each query example by measuring the Euclidean distance from the class prototype in a learned feature space. The model is an embedding function, implemented by a convolutional neural network $f_\phi(\cdot)$, that projects the input samples into a space where samples of the same category are close and those of different categories are far apart. Each embedding prototype $u_c$ of category $c$ is computed by averaging the embeddings of all support samples of class $c$:
\begin{align}
p_c = \frac{\sum_{(s_m,y_m)\in \sS}f_\phi(s_m)\sI[y_m=c]}{\sum_{(s_m,y_m)\in \sS}\sI[y_m=c]},
\end{align}
where $\sI[A]$ equals $1$ if $A$ is true and $0$ otherwise. Then, PN produces a probability over the $K$ categories of the episode based on a softmax over negative Euclidean distances of the embedding of the query $q_i$ to the embedded prototypes:
 \begin{align}
p(y=c|q_n,\sS)=\frac{e^{-d(f_\phi(q_n),p_c)}}{\sum_{c}e^{-d(f_\phi(q_n),p_c)}}.
\end{align}

\noindent\textbf{MAML \cite{finn2017model}.}  MAML is an initialization based meta-learning method. For some classification model (we consider the embedding module $f_\phi(\cdot)$ is connected to a linear classifier, parameterized by a bias vector $\rvb$ and a weight matrix $\rmW$), it aims to find the initial model parameters $\theta = (\phi,\rvb,\rmW)$ that one or a few steps of gradient descent on $\sS$ achieves good performance on $\sQ$. Formally, it classifies a query $q_n$ based on
\begin{align}
p(y=c|q_n,\sS) =  softmax(\rvb^\prime+\rmW^\prime f_{\phi^\prime}(q_n)).
\end{align} 
Note that support set is used to adjust initial parameters $\theta$ and produce the fine-tuned parameters $\theta^\prime = (\phi^\prime,\rvb^\prime,\rmW^\prime)$. In each episode of meta-training, the loss of the query set is used to update the initial model $\theta$. 
\fi
\section{Adaptive Task Sampling for Meta-Learning}
In this section, we first propose the class-based adaptive task sampling method which is a straightforward extension of the instance-based sampling. Then, we discuss its defect and present the class-pair based sampling method. Finally, we propose the greedy class-pair based sampling method, which significantly reduces the computation cost while still generating the identical task distribution as that in the non-greedy approach.

\subsection{Class-based Sampling}
A major challenge of adaptive task sampling for meta-learning is the implicit definition of the task, which is randomly generated by sampling $K$ classes in each episode. Although direct task based sampling is infeasible, we can adaptively sample classes for each $K$-way classification task. With this goal in mind, we propose a \textbf{class-based sampling (c-sampling)} approach that updates the class selection probability $p^{t+1}_C(c)$ in each episode. Given $\sS^t$ and $\sQ^t$ at episode $t$, we could update the class selection probability for each class in current episode $c\in \sL^t_K$ in the following way,
\begin{align}
p^{t+1}_C(c) \propto (p^{t}(c))^\tau e^{\alpha\frac{\sum_{(q_n,y_n)\in \sQ^t}\mathbb{I}[c \neq y_n]p(c|q_n,\sS^t)+\mathbb{I}[c= y_n](1-p(c|q_n,\sS^t))}{NK}}.
\end{align} 
Note that we average the prediction probability of classifying each query sample $n$ into incorrect classes in $\sL^t_K$. Then we can sample $K$ classes without replacement to construct the category set $\sL^{t+1}_K$ for the next episode.  

Despite its simplicity, such a sampling approach does suffer from an important limitation. It implicitly assumes that the difficulty of each class is independent. Therefore, it updates the class selection probability in a decoupled way.  In concrete words, suppose we have two different tasks: discerning ``corgi'', ``Akita'' and ``poodle'' and discerning ``corgi'', ``car'' and ``people''. Obviously, it is quite hard to tell ``corgi'' in the first task while it could be easy in the second one. This would be a challenging aspect for updating the class selection probability as the class-based sampling is agnostic to the context of the task and could accidentally assign contradictory scores to the same class. Secondly, even if the class selection probability is updated correctly, it cannot ensure that difficult tasks are generated properly. That is, assembling the most difficult classes do not necessarily lead to a difficult task.

\subsection{Class-Pair Based Sampling}
To address the above issue, we further propose a \textbf{class-pair based sampling (cp-sampling)} approach that exploits the pairwise relationships between classes. This idea is commonly used in the multi-class classification that constructs binary classifiers to discriminate between each pair of classes \cite{aly2005survey}, as two-class problems are much easier to solve. Recently, it has also been considered to extract the pairwise relationships between classes for task-dependent fast adaptation in few-shot learning \cite{RusuRSVPOH19}. In this work, we formulate the task selection probability by leveraging the Markov random field \cite{cross1983markov} over class pairs. Formally, the probability of choosing a category set $\sL^{t+1}_K$ at episode $t+1$ is defined as:
\begin{align}
\label{cp}
p^{t+1}_{CP}(\sL^{t+1}_K) \propto \prod_{(i,j)\in \sL^{t+1}_K} C^t(i,j)\quad\quad \text{s.t.~} i,j \in \sC_{tr},
\end{align}  
where $C^t(i,j)$ is a potential function over class pair $(i,j)$ at episode $t$. Notice that the classes in $\sC_{tr}$ form a complete and undirected graph. The category set $\sL^{t+1}_K$ that have a relatively high probability to be selected are those $K$-cliques with large potentials.
Similarly, we adaptively update the potential function $C^{t+1}(i,j)$ according to     
\begin{align}
\label{update_C}
C^{t+1}(i,j)\leftarrow (C^{t}(i,j))^\tau e^{\alpha\bar{p}((i,j)|\sS^t,\sQ^t)}, \quad i \neq j
\end{align}
where $\bar{p}((i,j)|\sS^t,\sQ^t)$ denotes the average prediction probability that classifies query samples in class $j$ into its incorrect class $i$ or vice versa. 
Specifically, we define it as 
\begin{align}
\label{prob_cp}
\bar{p}((i,j)|\sS^t,\sQ^t)=\frac{\sum_{(q_n,y_n=j)\in \sQ^t}p(c=i|q_n,\sS^t)}{N}+\frac{\sum_{(q_n,y_n=i)\in \sQ^t}p(c=j|q_n,\sS^t)}{N}.
\end{align} 
\begin{figure}[!tb]
\centering
\includegraphics[width=1\textwidth]{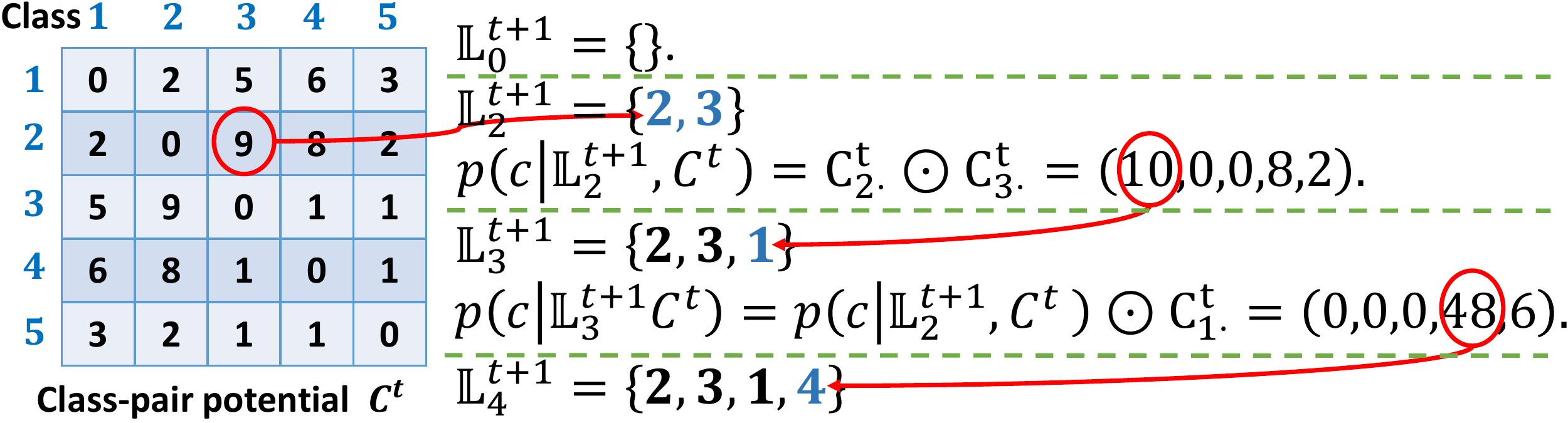}
\vspace{0.1in}
\caption{A toy example to illustrate how greedy class-pair based sampling chooses $4$-class category set $\sL^{t+1}_4$ from $5$ classes. The left correlation matrix indicates the class-pair potentials $C^t$ and the right part denotes the state of each step in sequential sampling. The blue number on the right denotes the chosen class and the red circle highlights the highest unnormalized class selection probability. $\odot$ denotes the element-wise multiplication.}
\label{showcase}
\end{figure}

\subsection{Greedy Class-Pair Based Sampling}
It is important to note that class-pair based sampling has the disadvantage that $\binom{K}{2}\cdot\binom{|\sC_{tr}|}{K}$ multiplication operations need to be performed for calculating $p_{CP}^{t+1}(\sL^{t+1}_K)$ for different combinations of $K$-class in the category set. To significantly reduce the complexity, we now design a \textbf{greedy class-pair based sampling (gcp-sampling)} method, which samples not only at the cost $O(K)$ but also from a distribution identical to that in Eq.~(\ref{cp}), due to the independence of the potential function $C^t(i,j)$ over class pairs. 
In particular, we sequentially sample classes in $K-1$ steps based on the previous results. At episode $t$, we first sample two classes based on class-pair potential function $C^t(i,j)$. Then we iteratively sample a new class based on the already sampled classes. Figure \ref{showcase} gives an example to illustrate the process. Formally, the task selection probability is defined as
\begin{align}
\label{gcp}
p^{t+1}_{GCP}(\sL^{t+1}_{k+1}) \propto \begin{cases} C^t(i,j), & k=1 
\\ p(c|\sL^{t+1}_{k}, C^t), & k>1
\end{cases}
\end{align}   
where $p(c=i|\sL^{t+1}_{k}, C^t)\propto \prod_{j\in \sL^{t+1}_{k}} C^t(i,j)$. It considers the joint probability over class pairs between the chosen class $i$ and every sampled class $j$ in the category set $\sL^{t+1}_{k}$. Compared to the distribution in Eq. (\ref{cp}), the greedy sampling approach in Eq.~(\ref{gcp}) has a different normalization constant in each step $k$. However, for the evaluation of task selection distribution, the unnormalized joint probability over the class pairs of a specific category set is identical which makes the distribution in Eq.~(\ref{gcp}) exactly the same as that in Eq.~(\ref{cp}), which we prove in Proposition~\ref{gcp_cp}.

\begin{algorithm}[t] \caption{gcp-sampling: Greedy Class-Pair based Sampling in K-Way-M-Shot}
\begin{algorithmic}[1]
\label{gcp_alg}
\REQUIRE meta-training data $\sD_{tr}$, hyperparameters $\alpha,\tau, T$
\STATE Randomly initialize meta model parameter $\theta$. Initialize class-pair potentials $C$ by ones
\FOR {$t = 1,\dots,T$}
    \STATE Initialize $\sL^t_0$ by an empty set. Initialize $p(c|\sL^{t}_{0}, C^{t-1})$ by $\frac{1}{|\sC_{tr}|}$
    \STATE Sample class pair $(i,j) \propto C(i,j)$, add class $i$ and $j$ into $\sL^t_0$
    \FOR {$k = 2,\dots, K- 1$}
    \STATE Update $p(c=i|\sL^{t}_{k}, C^{t-1})\propto \prod_{j\in \sL^{t}_{k}} C^{t-1}(i,j)$
    \STATE Sample class $c$ based on $p(c|\sL^{t}_{k}, C^{t-1})$, add class $c$ into $\sL^t_{k+1}$
    \ENDFOR
    \STATE Construct support set $\sS^t$ and query set $\sQ^t$ by sampling $M$ and $N$ image per class in category set $\sL^t_{K}$, respectively
    \STATE Update meta model $\theta$ based on support set and query set
    \STATE Update class-pair potentials $C$ according to Eq. (\ref{update_C})
\ENDFOR
\RETURN $\theta_T$
\end{algorithmic}
\end{algorithm}
\begin{prop} 
\label{gcp_cp}
The greedy class-pair based sampling strategy in Eq. (\ref{gcp}) is identical to the class-pair based sampling in Eq. (\ref{cp}).
\end{prop}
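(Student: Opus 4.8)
The plan is to show that both sampling schemes assign the \emph{same unnormalized weight} to every candidate category set $\sL^{t+1}_K$, namely the product $\prod_{(i,j)\in\sL^{t+1}_K} C^t(i,j)$ of all its class-pair potentials; since the distributions in Eq.~(\ref{cp}) and Eq.~(\ref{gcp}) are supported on the same collection of $K$-subsets of $\sC_{tr}$, equality of the unnormalized weights would force equality of the normalized distributions. First I would rewrite the greedy scheme as a product of its sequential conditionals: a trajectory producing $\sL^{t+1}_K$ draws a seed pair with weight $C^t(i,j)$ and then, at each step $k$, appends a class $c$ with weight $\prod_{j\in\sL^{t+1}_k} C^t(c,j)$, so the unnormalized probability of the whole trajectory is the product of these step weights.

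The core step is a telescoping argument (induction on $k$) establishing that, \emph{along any trajectory} yielding $\sL^{t+1}_K$, this product of step weights collapses to $\prod_{(i,j)\in\sL^{t+1}_K} C^t(i,j)$, independently of the order in which classes were inserted. The key observation is combinatorial: when class $c$ is appended to the partial set $\sL^{t+1}_k$, its step weight contributes exactly the $k$ potentials $C^t(c,j)$ linking $c$ to the already-chosen classes, i.e.\ exactly the new edges of the clique created at that step. Summing over all $K-1$ insertion steps, every edge $(i,j)$ of the complete graph on $\sL^{t+1}_K$ is counted once and only once --- at the moment the later of its two endpoints is added --- so the accumulated numerator is precisely the full clique potential appearing in Eq.~(\ref{cp}). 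This matches the informal statement preceding the proposition that ``the unnormalized joint probability over the class pairs of a specific category set is identical.''

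Finally I would invoke normalization: if both Eq.~(\ref{cp}) and Eq.~(\ref{gcp}) define masses on the $K$-subsets with identical unnormalized mass, the implied normalizers coincide and the distributions are equal. The main obstacle, and the step needing genuine care rather than bookkeeping, is precisely this normalization: the greedy scheme uses a \emph{different} normalizer at every step $k$, and many distinct insertion orders map to the same $\sL^{t+1}_K$, so a fully rigorous argument must show that summing the trajectory weights over all orderings still reproduces a single, set-independent proportionality constant --- equivalently, that one may legitimately reason at the level of the accumulated numerator alone. I would therefore devote the bulk of the proof to reconciling the per-step normalizers with the single global normalizer of Eq.~(\ref{cp}), using the order-independence of the numerator established above as the engine, and I would flag this reconciliation as the delicate point where the proportionality argument must be made explicit rather than asserted.
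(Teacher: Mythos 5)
Your telescoping induction on the accumulated numerator is essentially the paper's own proof: the paper performs the same chain-rule induction, concluding that the product of step weights collapses to $\prod_{(i,j)\subset \sL^{t+1}_K} C^t(i,j)$, and that bookkeeping (each edge counted exactly once, when the later of its endpoints is inserted) is correct. The problem is the step you explicitly defer to ``the bulk of the proof'' --- showing that summing trajectory weights over all insertion orders reproduces a single, set-independent proportionality constant. That step is not merely delicate; it cannot be carried out, because the claim it needs is false. Take $\sC_{tr}=\{1,2,3,4\}$, $K=3$, $C^t(1,2)=2$, $C^t(1,3)=3$, and $C^t(i,j)=1$ for the remaining four pairs. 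Eq.~(\ref{cp}) gives the four triples unnormalized masses $6,2,3,1$, so $p_{CP}(\{1,2,3\})=6/12=1/2$. Under Eq.~(\ref{gcp}) the three trajectories reaching $\{1,2,3\}$ contribute
\[
\frac{2}{9}\cdot\frac{3}{4}\;+\;\frac{3}{9}\cdot\frac{2}{3}\;+\;\frac{1}{9}\cdot\frac{6}{7}\;=\;\frac{61}{126}\;\neq\;\frac{1}{2}.
\]
Even at the level of a single trajectory the proportionality fails: the orders $(1,2)\!\to\!3$ and $(1,3)\!\to\!2$ have the same clique numerator $6$ but probabilities $1/6$ and $2/9$ respectively, because the step-$k$ normalizer depends on which partial set $\sL^{t+1}_k$ has been built so far.

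So the gap you flagged is genuine and, in fact, fatal to the proposition as stated: equality of unnormalized numerators does not force equality of the normalized distributions when the normalizers vary per step and per trajectory, and the sum over insertion orders of the inverse-normalizer products is not constant across final sets. Note that the paper's own proof commits precisely the error your last paragraph warns against --- its ``$\propto$'' signs silently treat the proportionality constant as uniform over all category sets, which is exactly what the counterexample refutes. Diagnostically, then, your proposal is sharper than the paper; but as a proof it is incomplete, and no completion exists. What can honestly be established is weaker: (i) the accumulated unnormalized weight of any trajectory equals the clique product of Eq.~(\ref{cp}) (your induction, and the paper's); and (ii) the two distributions coincide in symmetric special cases, e.g.\ when all potentials are equal, where both reduce to the uniform distribution over $K$-subsets. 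If you want a correct general statement, you would have to either redefine the greedy scheme (e.g.\ with acceptance--rejection or an ordering correction that cancels the history-dependent normalizers) or weaken the proposition to the numerator-level identity.
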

\begin{proof}
We present a proof by induction. It is obvious that $p^{t+1}_{GCP}(\sL^{t+1}_{2})=p^{t+1}_{CP}(\sL^{t+1}_{2})$ since $p^{t+1}_{GCP}(\sL^{t+1}_{2})\propto C^t(i,j)$. Now let us consider a general case where we have previously sampled $k$ classes with $\sL^{t+1}_{k}$ and are about to sample the $(k+1)$-th class. Suppose we sample a new class $l$ to generate $\sL^{t+1}_{k+1}$, according to Eq. (\ref{gcp}), we have
\begin{align}
p^{t+1}_{GCP}(\sL^{t+1}_{k+1}) &= p^{t+1}_{GCP}(\sL^{t+1}_{k})p(c=l|\sL^{t+1}_{k}, C^t)\propto \prod_{(i,j)\subset \sL^{t+1}_k} C^t(i,j)\prod_{j\in \sL^{t+1}_{k}} C^t(l,j)\nonumber\\
&=\prod_{(i,j)\subset \sL^{t+1}_{k+1}} C^t(i,j)=p^{t+1}_{CP}(\sL^{t+1}_{k+1}).
\end{align}
\end{proof}
The pseudocode of the proposed gcp-sampling algorithm is given in Algorithm \ref{gcp_alg}. Due to the space limitation, we leave the theoretical analysis of the proposed gcp-sampling method in terms of its generalization ability to the supplementary material. 

\section{Experiments}
In this section, we evaluate the proposed adaptive task sampling method on two few-shot classification benchmarks: miniImageNet \cite{vinyals2016matching} and CIFAR-FS \cite{bertinetto2018meta}. We first introduce the datasets and settings, and then present a comparison to state-of-the-art methods, followed by a detailed evaluation of the compatibility when integrating with different meta-learning algorithms and the efficacy of different sampling strategies. Finally, we demonstrate qualitative results to characterize the gcp-sampling.

\subsection{Datasets and Evaluation}
\subsubsection{Datasets.} We conduct experiments to evaluate our method on two few-shot classification benchmarks. Firstly, \textbf{miniImageNet} \cite{vinyals2016matching} is widely used for few-shot learning, which is constructed based on the ImageNet dataset \cite{russakovsky2015imagenet} and thus has high diversity and complexity. This dataset has 100 classes with 600 $84 \times 84$ images per class. These classes are divided into 64, 16 and 20 classes for meta-training, meta-validation and meta-test, respectively, as suggested earlier \cite{ravi2016optimization,finn2017model,sun2019meta}. Secondly, \textbf{CIFAR-FS} is another recent few-shot image classification benchmark \cite{bertinetto2018meta} constructed by randomly sampling from the CIFAR-100 dataset \cite{krizhevsky2009learning} using the same criteria as the miniImageNet, and has the same number of classes and samples. The limited resolution of $32 \times 32$ makes the task still difficult. We also use the 64 / 16 / 20 divisions for consistency with previous studies \cite{bertinetto2018meta,lee2019meta}.

\subsubsection{Evaluation metric.} We report the mean accuracy $(\%)$ of 1000 randomly generated episodes as well as the $95\%$ confidence intervals on the meta-test set. In every episode during meta-test, each class has 15 queries.

\subsection{Implementation Details}
We validate the efficacy of the proposed adaptive sampling strategy on different meta-learning methods, including the gradient-based meta-learning methods: MAML \cite{finn2017model}, Reptile \cite{nichol2018first} and MAML++ \cite{antoniou2018train}, and metric-based meta-learning methods: PN \cite{snell2017prototypical} and MN \cite{vinyals2016matching}. We evaluate our adaptive task sampling strategy on all these meta-learning algorithms based on their open-source implementations\footnote{Available at the following sites.
(a) Matching Network \cite{vinyals2016matching}: \url{https://github.com/wyharveychen/CloserLookFewShot/}, 
(b) PN \cite{snell2017prototypical}: \url{https://github.com/kjunelee/MetaOptNet}, \url{https://github.com/wyharveychen/CloserLookFewShot/}, 
(c) MAML \cite{finn2017model} and MAML++ \cite{antoniou2018train}: \url{https://github.com/AntreasAntoniou/HowToTrainYourMAMLPytorch}, 
(d) Reptile \cite{nichol2018first}: \url{https://github.com/dragen1860/Reptile-Pytorch}.
}.

\subsubsection{Network Architectures.} We conduct experiments with 2 different feature extractor architectures, Conv-4 and ResNet-12. \textbf{Conv-4} is a shallow embedding function proposed by \cite{vinyals2016matching} and widely used \cite{finn2017model,antoniou2018train,snell2017prototypical,nichol2018first}. It is composed of 4 convolutional blocks, each of which comprises a 64-filter $3 \times 3$ convolution, batch normalization (BN) \cite{ioffe2015batch}, a ReLU nonlinearity and a $2 \times 2$ max-pooling layer. We also adopt a deep backbone \textbf{ResNet-12} \cite{he2016deep}, which achieves significant improvement  in recent works \cite{Mishra2017ASN,munkhdalai2018rapid,oreshkin2018tadam}. It consists of 4 residual blocks, each of which has three $3 \times 3$ convolutional layers and a $2 \times 2$ max-pooling layer. The number of filters starts from 64 and is doubled every next block. There is also a mean-pooling layer compressing the feature maps to a feature embedding in the end. 

In our experiments, we integrate gcp-sampling with PN, MetaOptNet-RR and Meta\-OptNet-SVM with ResNet-12 to compare with state of the arts. We follow the settings of \cite{lee2019meta} and use SGD with Nesterov momentum of 0.9 and weight decay of 0.0005. Besides, we use Conv-4 to evaluate the compatibility when integrating with different meta-learning algorithms and the efficacy of different sampling strategies. We follow the settings of \cite{ChenLKWH19} and use Adam \cite{kingma2014adam} optimizer with an initial learning rate of 0.001.


\begin{table}[htb]
    \caption{Average 5-way, 1-shot and 5-shot classification accuracies (\%) on the miniImageNet dataset. ${}^\star$ denotes the results from \cite{lee2019meta}.}
    \centering
    \begin{tabular}{*1l*1c*2c*2c}
        \hline\noalign{\smallskip}
        ~ Methods                                                & \quad Backbone \quad  & \quad\quad 5-way-1-shot\quad\quad        &\quad\quad 5-way-5-shot \quad\quad \\
        \noalign{\smallskip}\hline\noalign{\smallskip}
        Matching Network \cite{vinyals2016matching}       &CONV-4     &$43.44 \pm 0.77$  &$55.31 \pm 0.73$ \\
        Relation Network \cite{sung2018learning}          &CONV-4     &$50.44 \pm 0.82$  &$65.32 \pm 0.70$ \\
        PN \cite{snell2017prototypical}                   &CONV-4     &$49.42 \pm 0.78$  &$68.20 \pm 0.66$ \\
        MAML \cite{finn2017model}                         &CONV-4     &$48.70 \pm 1.84$  &$63.11 \pm 0.92$ \\
        MAML++ \cite{antoniou2018train}                   &CONV-4     &$52.15 \pm 0.26$  &$68.32 \pm 0.44$ \\
        MAML++, AS (ours)                                 &CONV-4     &$52.34 \pm 0.81$  &$69.21 \pm 0.68$ \\
        Bilevel Programming \cite{franceschi2018bilevel}  &ResNet-12  &$50.54 \pm 0.85$  &$64.53 \pm 0.68$ \\
        MetaGAN \cite{zhang2018metagan}                   &ResNet-12  &$52.71 \pm 0.64$  &$68.63 \pm 0.67$ \\
        SNAIL \cite{Mishra2017ASN}                        &ResNet-12  &$55.71 \pm 0.99$  &$68.88 \pm 0.92$ \\
        adaResNet \cite{munkhdalai2018rapid}              &ResNet-12  &$56.88 \pm 0.62$  &$71.94 \pm 0.57$ \\
        TADAM \cite{oreshkin2018tadam}             
        &ResNet-12  &$58.50 \pm 0.30$  &$76.70 \pm 0.30$ \\
        MTL \cite{sun2019meta}             
        &ResNet-12  &$61.2 \pm 1.8$  &$75.5 \pm 0.8$ \\

        PN${}^\star$ \cite{lee2019meta}             
        &ResNet-12  &$59.25 \pm 0.64$  &$75.60 \pm 0.48$ \\
        PN with gcp-sampling          
        &ResNet-12  &$\textbf{61.09} \pm 0.66$  &$\textbf{76.80} \pm 0.49$ \\      
        MetaOptNet-RR \cite{lee2019meta}             
        &ResNet-12  &$61.41 \pm 0.61$  &$77.88 \pm 0.46$ \\
        MetaOptNet-RR with gcp-sampling            
        &ResNet-12  &$\textbf{63.02} \pm 0.63$  &$\textbf{78.91} \pm 0.46$ \\
        MetaOptNet-SVM \cite{lee2019meta}             
        &ResNet-12  &$62.64 \pm 0.61$  &$78.63 \pm 0.46$ \\
        MetaOptNet-SVM with gcp-sampling           
        &ResNet-12  &$\textbf{64.01} \pm 0.61$  &$\textbf{79.78} \pm 0.47$ \\
        \noalign{\smallskip}\hline
    \end{tabular}
    \label{table_sota_miniimagenet}
\end{table}
\begin{table}[!htb]
    \caption{Average 5-way, 1-shot and 5-shot classification accuracies (\%) on the CIFAR-FS dataset. ${}^\star$ denotes the results from \cite{lee2019meta}.}
    \centering
    \begin{tabular}{*1l*1c*2c*2c}
        \hline\noalign{\smallskip}
        ~Methods                                            &\quad Backbone\quad   &\quad\quad 5-way-1-shot\quad\quad      &\quad\quad5-way-5-shot\quad\quad \\
        \noalign{\smallskip}\hline\noalign{\smallskip}
        Relation Network \cite{sung2018learning}     &CONV-4     &$55.0 \pm 1.0$  &$69.3 \pm 0.8$ \\
        PN$^\star$ \cite{snell2017prototypical}        &CONV-4     &$55.5 \pm 0.7$  &$72.0 \pm 0.6$ \\
        MAML$^\star$ \cite{finn2017model}              &CONV-4     &$58.9 \pm 1.9$  &$71.5 \pm 1.0$ \\
        GNN \cite{Satorras2017FewShotLW}             &CONV-4     &$61.9        $  &$75.3        $ \\
        R2D2 \cite{lee2019meta}                      &CONV-4     &$65.3 \pm 0.2$  &$79.4 \pm 0.1$ \\
        PN$^\star$ \cite{lee2019meta}                  &ResNet-12  &$72.2 \pm 0.7$  &$84.2 \pm 0.5$ \\
        PN with gcp-sampling 
        &ResNet-12  &$\textbf{74.1} \pm 0.7$  &$\textbf{84.5} \pm 0.5$ \\ 
        MetaOptNet-RR \cite{lee2019meta}            &ResNet-12  &$72.6 \pm 0.7$  &$84.3 \pm 0.5$ \\
        MetaOptNet-RR with gcp-sampling
        &ResNet-12  &$\textbf{74.2} \pm 0.7$  &$\textbf{85.1} \pm 0.4$ \\
        MetaOptNet-SVM \cite{lee2019meta}            &ResNet-12 &$72.0 \pm 0.7$  &$84.2 \pm 0.5$ \\
        MetaOptNet-SVM with gcp-sampling            &ResNet-12  &$\textbf{73.9} \pm 0.7$  &$\textbf{85.3} \pm 0.5$ \\
        \noalign{\smallskip}\hline
    \end{tabular}
    \label{table_sota_cifarfs}
\end{table}


\begin{table*}[!htb]
    \caption{Average 5-way classification accuracies ($\%$) on miniImageNet and CIFAR-FS. All methods use shallow feature backbone (Conv-4). ${}^\dag$ denotes the local replication results. We run PN without oversampling the number of ways.}
    \centering
    \begin{tabular}{*1l*2c*2c*2c*2c}
        \hline\noalign{\smallskip}
        \smallskip                  & \multicolumn{2}{c}{miniImageNet}    & \multicolumn{2}{c}{CIFAR-FS}       \\
        Model                       & 1-shot       & 5-shot       & 1-shot       & 5-shot      \\
        \noalign{\smallskip}\hline\noalign{\smallskip}
        Matching Network ${}^\dag$  &$48.26 \pm 0.76$  &$62.27 \pm 0.71$  &$53.14 \pm 0.85$  &$68.16 \pm 0.76$ \\
        Matching Network with gcp-sampling        &$\textbf{49.61} \pm 0.77$  &$\textbf{63.23} \pm 0.75$  &$\textbf{54.72} \pm 0.87$  &$\textbf{69.28} \pm 0.74$ \\
        PN ${}^\dag$                &$44.15 \pm 0.76$  &$63.89 \pm 0.71$  &$54.87 \pm 0.72$  &$71.64 \pm 0.58$ \\
        PN with gcp-sampling                     &$\textbf{47.13} \pm 0.81$  &$\textbf{64.75} \pm 0.72$  &$\textbf{56.12} \pm 0.81$  &$\textbf{72.77} \pm 0.64$ \\
        Reptile ${}^\dag$           &$46.12 \pm 0.80$  &$63.56 \pm 0.70$  &$55.86 \pm 1.00$  &$71.08 \pm 0.74$ \\
        Reptile with gcp-sampling                 &$\textbf{47.60} \pm 0.80$  &$\textbf{64.56} \pm 0.69$  &$\textbf{57.25} \pm 0.99$  &$\textbf{71.69} \pm 0.71$ \\
        MAML ${}^\dag$              &$48.25 \pm 0.62$  &$64.09 \pm 0.70$  &$56.93 \pm 0.99$  &$72.10 \pm 0.74$ \\
        MAML with gcp-sampling                    &$\textbf{49.65} \pm 0.85$  &$\textbf{65.37} \pm 0.70$  &$\textbf{57.62} \pm 0.97$  &$\textbf{72.51} \pm 0.72$ \\
        MAML++ ${}^\dag$            &$50.60 \pm 0.82$  &$68.24 \pm 0.68$  &$58.87 \pm 0.97$  &$73.86 \pm 0.76$ \\
        MAML++ with gcp-sampling                  &$\textbf{52.34} \pm 0.81$  &$\textbf{69.21} \pm 0.68$  &$\textbf{60.14} \pm 0.97$  &$\textbf{73.98} \pm 0.74$ \\
        \noalign{\smallskip}\hline
    \end{tabular}
    \label{table_effect_of_adaptive_sampling}
\end{table*}
\begin{table*}[!htb]
    \caption{Average 5-way classification accuracies ($\%$) on miniImageNet and CIFAR-FS. Using MAML++ on a Conv-4 backbone, we compare different sampling methods: random, c-sampling with hard class, gcp-sampling with hard/uncertain/easy class.} 
    \centering
    \begin{tabular}{*1l*2c*2c*2c*2c}
        \hline\noalign{\smallskip}
        \smallskip                                     & \multicolumn{2}{c}{miniImageNet}    & \multicolumn{2}{c}{CIFAR-FS} \\
        Sampling Strategy                              & 5-way-1-shot       & 5-way-5-shot       & 5-way-1-shot       & 5-way-5-shot \\
        \noalign{\smallskip}\hline\noalign{\smallskip}
        random sampling                      &$50.60 \pm 0.82$  &$68.24 \pm 0.68$  &$58.87 \pm 0.97$  &$73.36 \pm 0.76$ \\
        c-sampling with hard class      &$51.43 \pm 0.75$  &$68.74 \pm 0.67$  &$58.61 \pm 0.92$  &$73.98 \pm 0.72$ \\
        gcp-sampling with easy class       &$50.88 \pm 0.88$  &$68.22 \pm 0.72$  &$58.73 \pm 1.14$  &$73.41 \pm 0.76$ \\
        gcp-sampling with uncertain class   &$51.73 \pm 0.87$  &$69.01 \pm 0.72$  &$59.43 \pm 1.02$  &$73.84 \pm 0.82$ \\
        gcp-sampling with hard class       &$\textbf{52.34} \pm 0.81$  &$\textbf{69.21} \pm 0.68$  &$\textbf{60.14} \pm 0.97$  &$\textbf{74.58} \pm 0.74$ \\
        \noalign{\smallskip}\hline
    \end{tabular}
    \label{table_sampling_strategies}
\end{table*}

\subsection{Results and Analysis}
\if 0
\begin{figure*}[!htb]
    \includegraphics[width=0.245\textwidth]{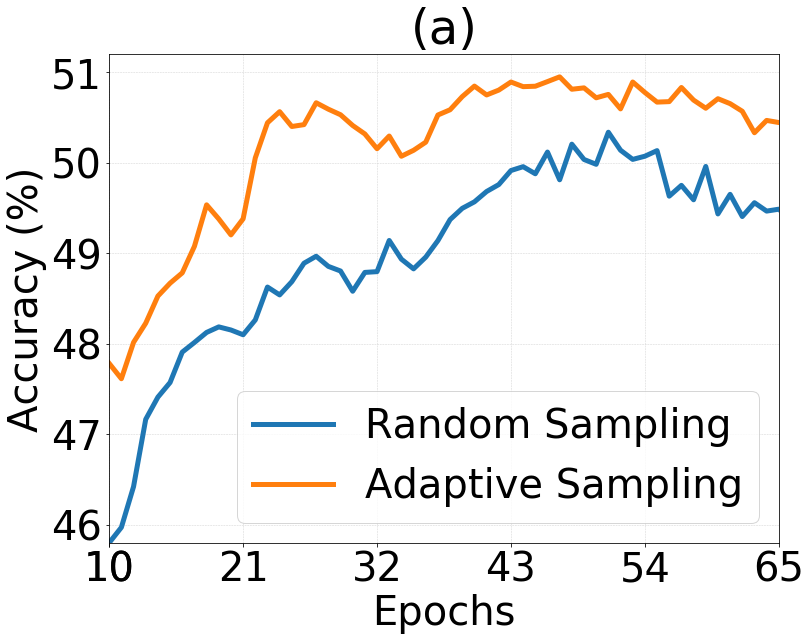}
    \includegraphics[width=0.245\textwidth]{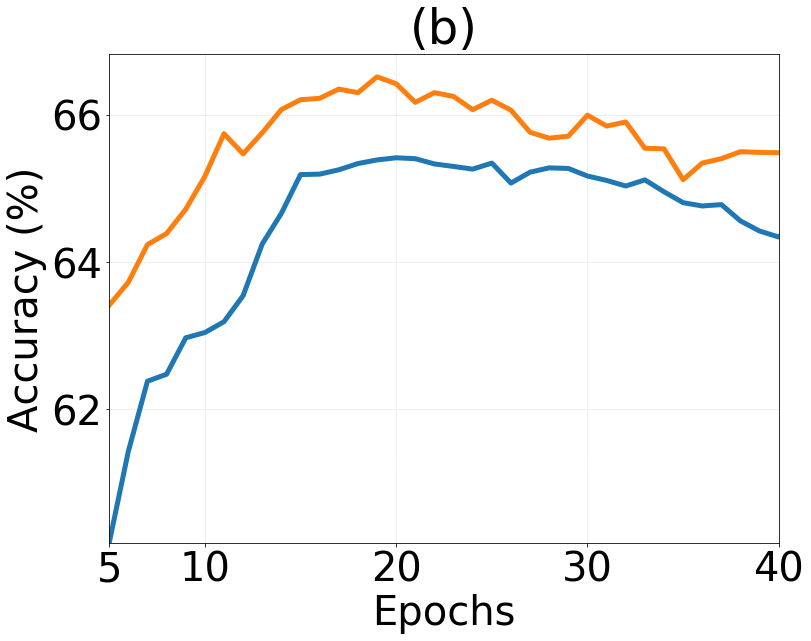}
    \includegraphics[width=0.245\textwidth]{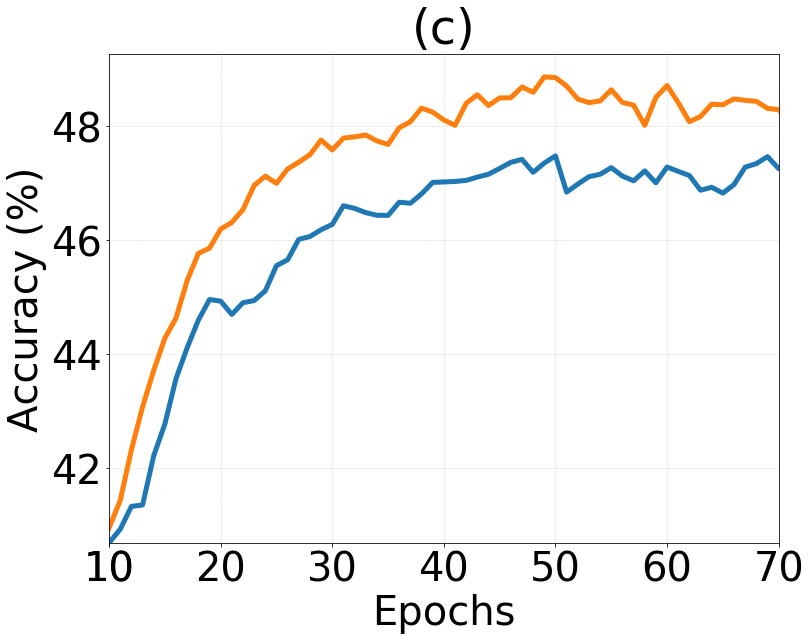}
    \includegraphics[width=0.245\textwidth]{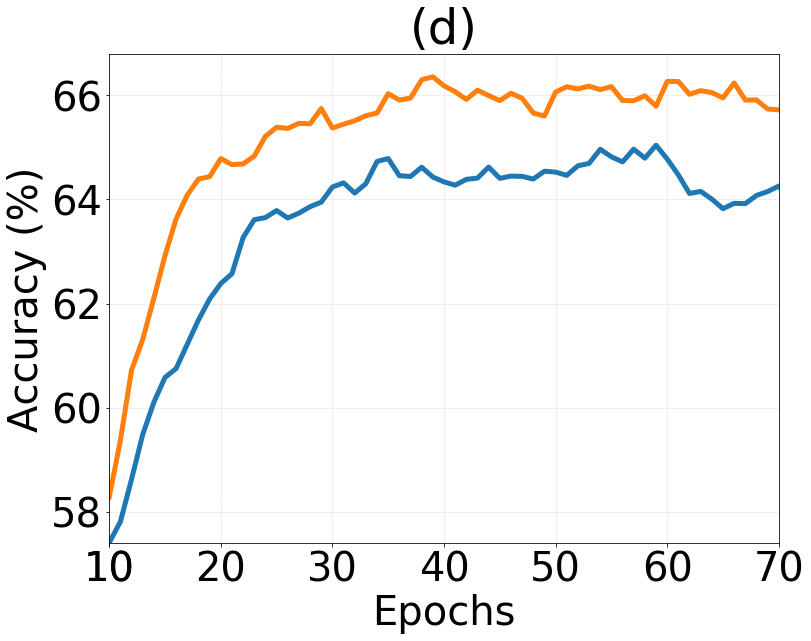}
    \\
    \includegraphics[width=0.245\textwidth]{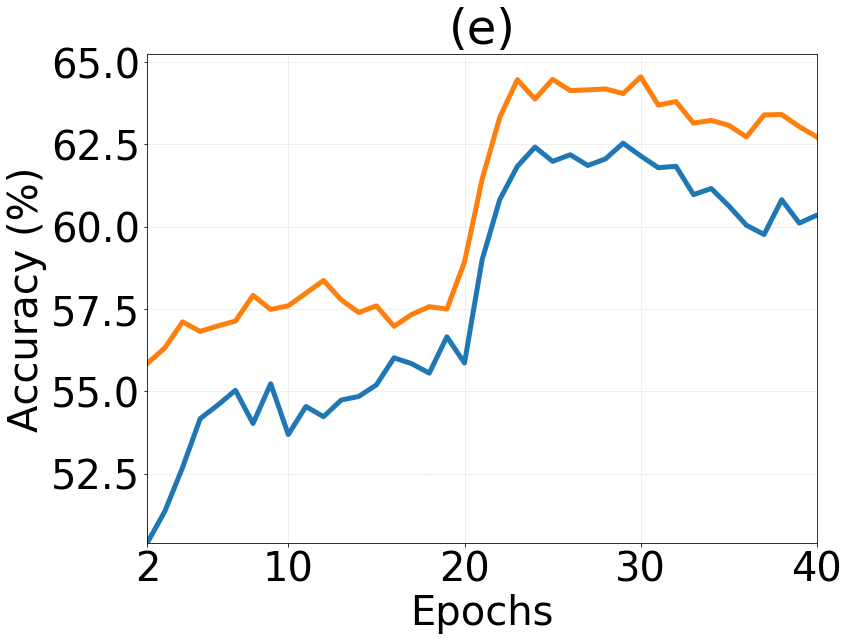}
    \includegraphics[width=0.245\textwidth]{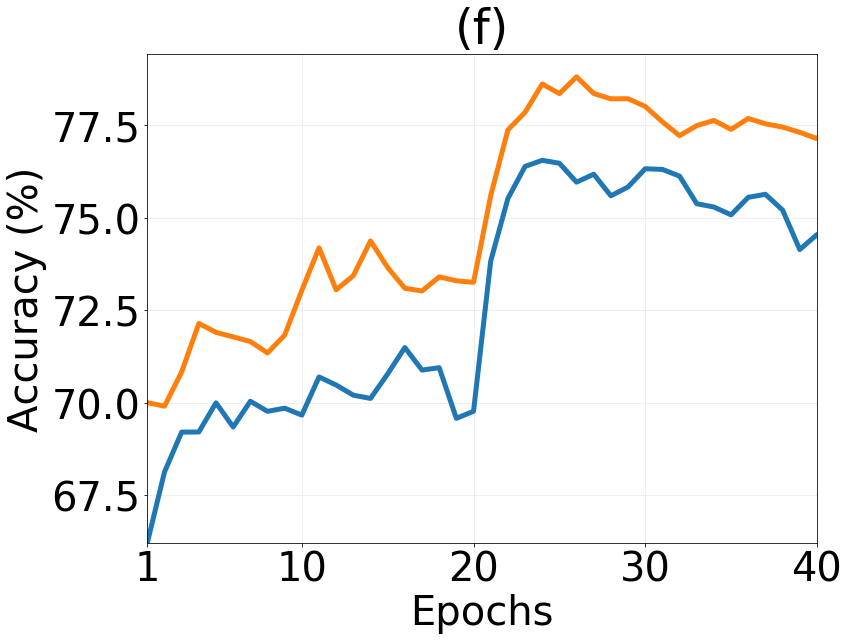}
    \includegraphics[width=0.245\textwidth]{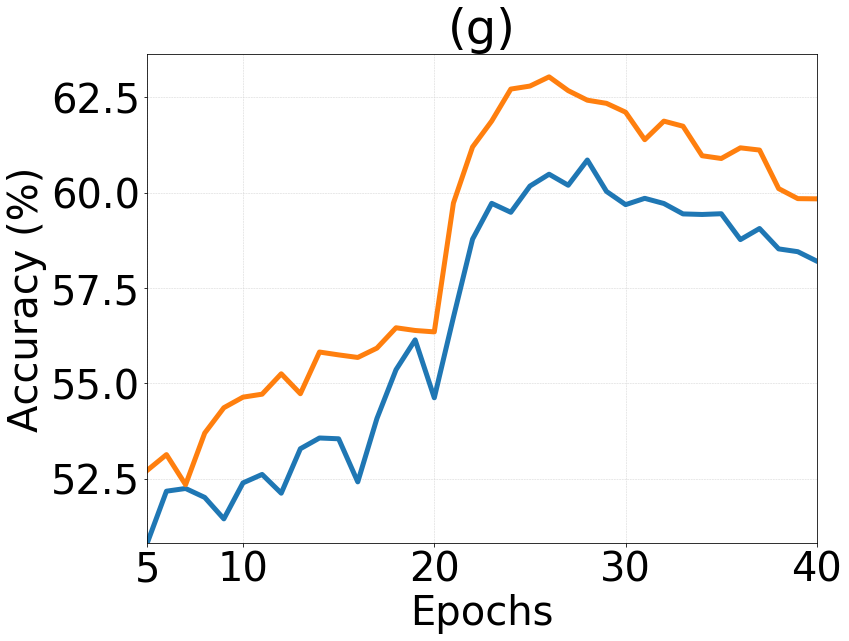}
    \includegraphics[width=0.235\textwidth]{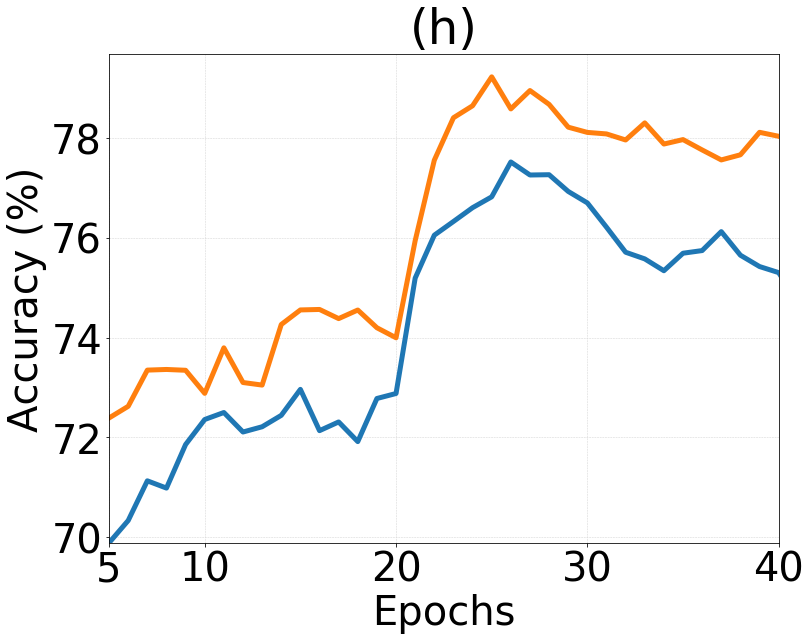}
    \caption{(a)(b)(c)(d) show the results of the MAML with Conv-4 backbone on 5way-1shot (a, c), 5way-5shot (b, d) settings of CIFAR-FS (a, b) and miniImageNet (c, d); and (e)(f)(g)(h) show the results of the PN with ResNet-12 backbone.}
    \vspace{-0.3cm}
    \label{fig_training_curves}
\end{figure*}
\fi
\subsubsection{Comparison with state-of-the-art.} Tables \ref{table_sota_miniimagenet} and \ref{table_sota_cifarfs} present the 5-way 1-shot and 5-way 5-shot results on miniImageNet and CIFAR-FS datasets, respectively. Note that it shows the highest accuracies for which the iterations are chosen by validation. For our approach, we integrate gcp-sampling with PN, MON-RR and MON-SVM, which are strong baselines. For all cases, 
we achieve comparable performance surpassing prior methods by a meaningful margin. For example, PN with gcp-sampling outperforms the PN with ResNet-12 by around 1.84 and 1.2 percentage points in miniImageNet and 1.89 and 1.0 percentage points in CIFAR-FS. It is worth noting that the adaptive task sampling method is orthogonal to the meta-learning algorithm. Moreover, even for a deep feature backbone, our approach is still able to preserve the performance gain. 

\subsubsection{Compatibility with different meta-learning algorithms} Next, we study the impact of gcp-sampling when integrating with different types of meta-learning algorithm. We consider gradient-based meta-learning methods: MAML, Reptile and MAML++, and metric-based meta-learning methods: PN and MN. The results in Table \ref{table_effect_of_adaptive_sampling} demonstrate that using gcp-sampling for meta-learning methods consistently improves the few-shot classification performance. Moreover, the performance improvement is more significant for 1-shot classification than 5-shot classification.

\subsubsection{Efficacy of different adaptive task sampling strategies.} In literature, there exist contradicting ideas in adaptive sampling strategies which work well in different scenarios \cite{chang2017active}. Preferring easier samples may be effective when solving challenging problems containing noise or outliers. The opposite hard sample mining strategy may improve the performance since it is more likely to be minority classes. Therefore, we explore different sampling strategies for meta-learning for few-shot classification. As defined in Eq.~(\ref{prob_cp}) for hard class, the probability of easy class is $1-\bar{p}(i,j)$ and uncertain class is $(1-\bar{p}(i,j))(\bar{p}(i,j))$, respectively. We report the results in Table \ref{table_sampling_strategies}. We  observe that gcp-sampling with hard or uncertain class outperforms that with random sampling, but uncertain sampling offers a smaller improvement. We also compare gcp-sampling with c-sampling, in which c-sampling achieves similar performance as random sampling, verifying the efficacy of using class pairs to represent task difficulty.

\subsubsection{Impact of Hyperparameters $\alpha$ and $\tau$}
In the proposed gcp-sampling, the hyperparameter $\alpha$ controls the aggressiveness of the update while the hyperparameter $\tau$ controls the degree of forgetting past updates. Here we adopt PN with \textbf{ResNet-12} backbone and report the effect of $\alpha$ and $\tau$ on the testing performance in Figure \ref{fig_hyper_parameters}.

\begin{figure*}[htb]
    \includegraphics[width=0.245\textwidth]{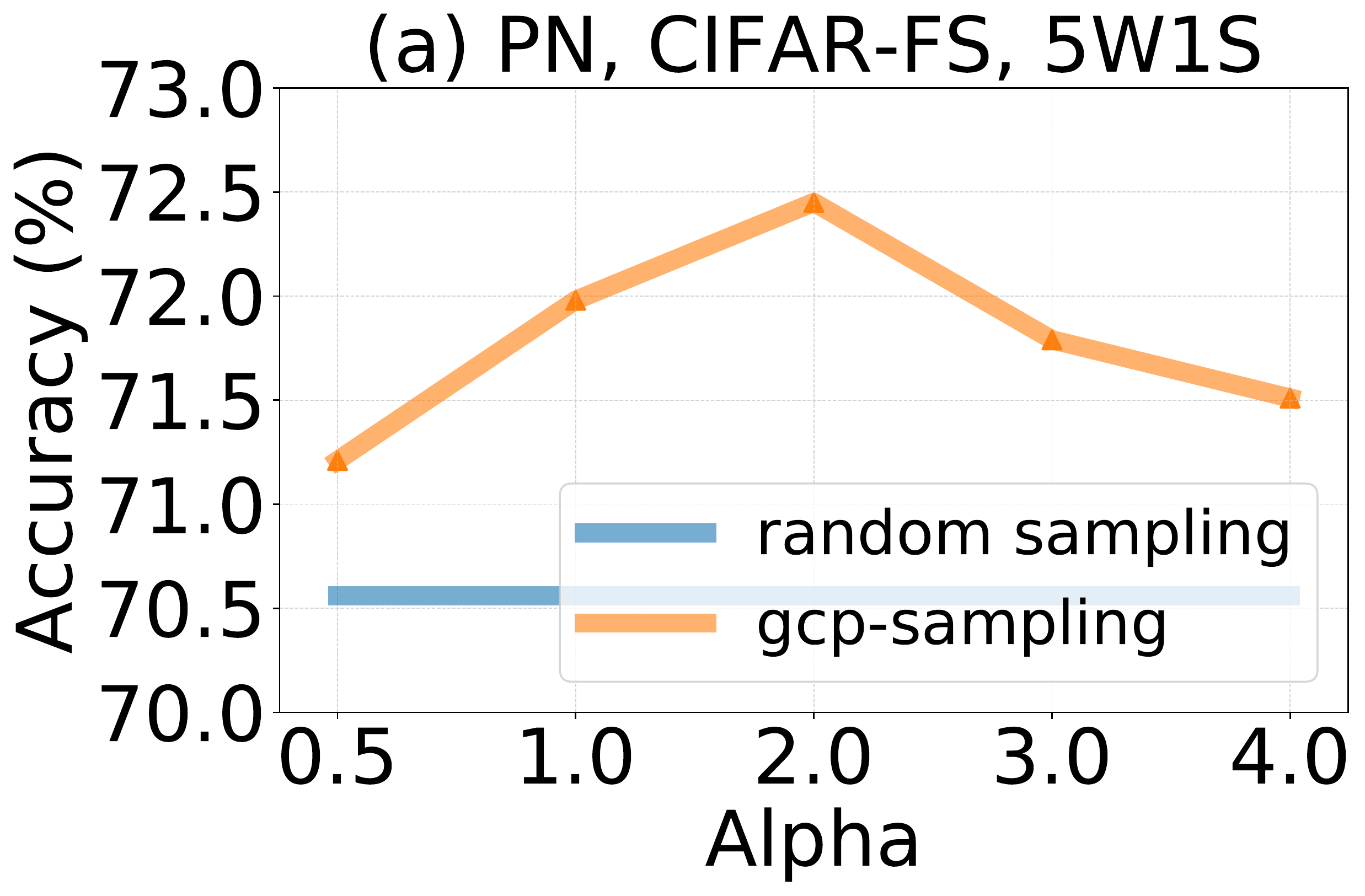}
    \includegraphics[width=0.245\textwidth]{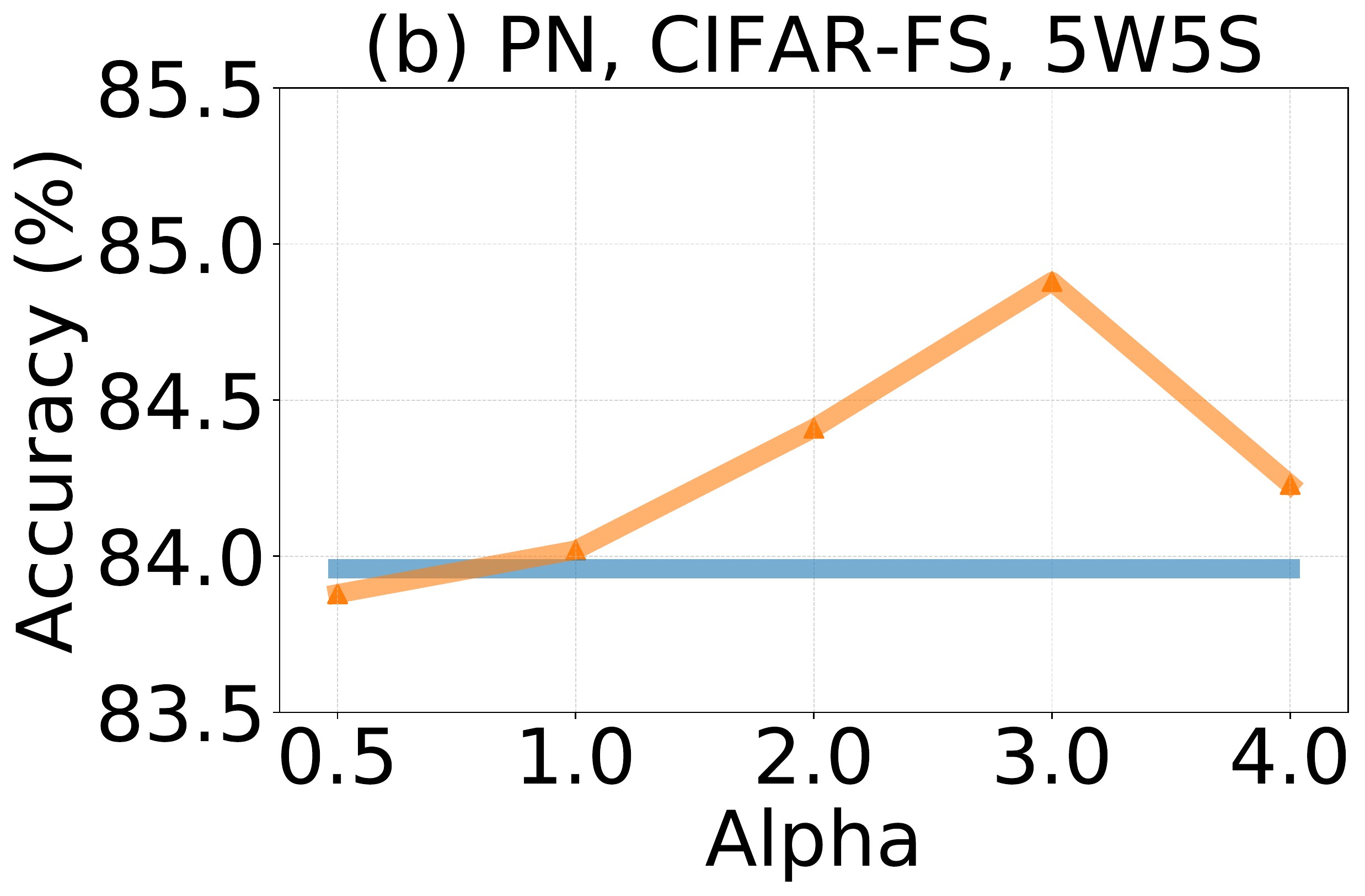}
    \includegraphics[width=0.245\textwidth]{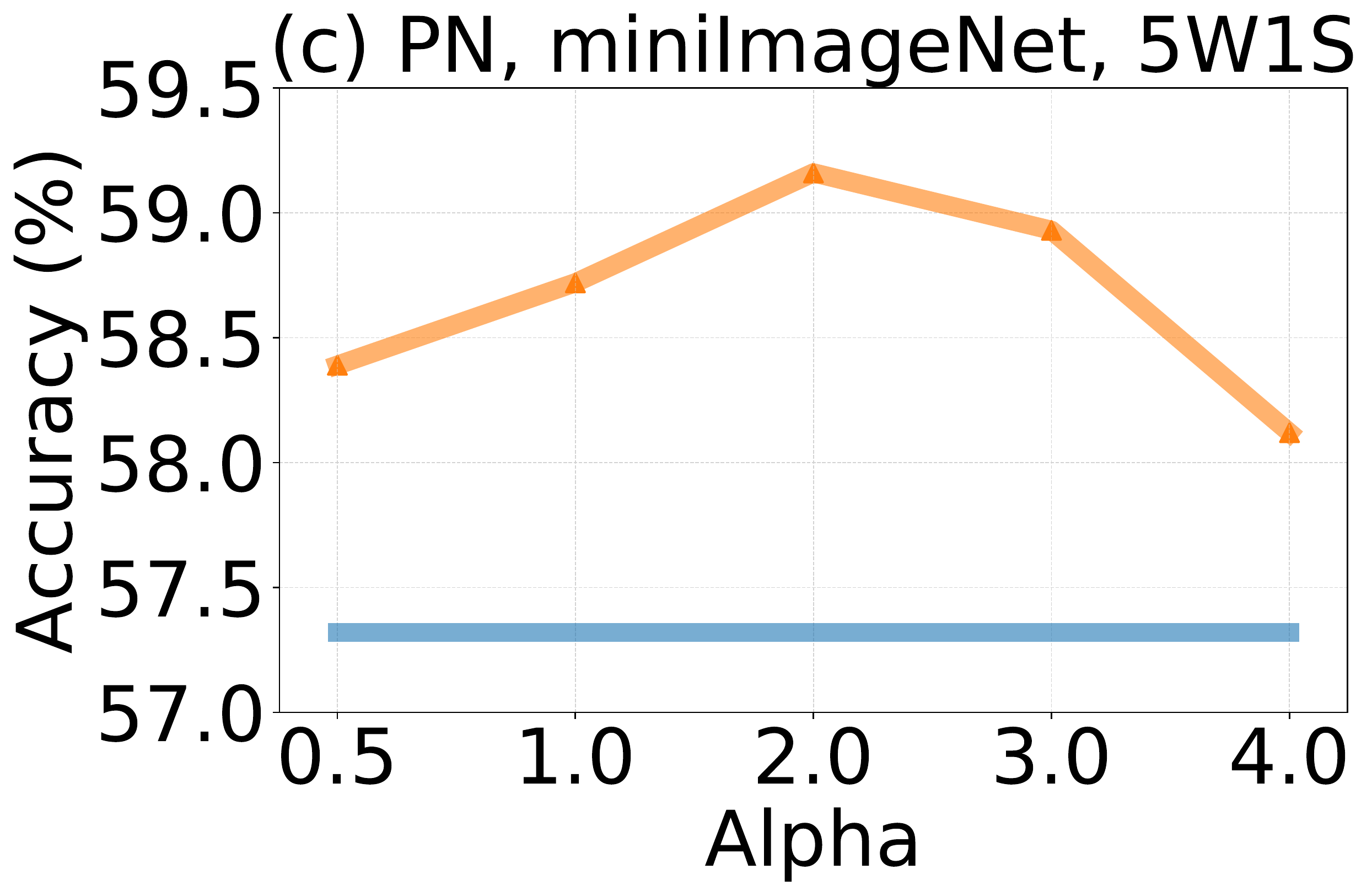}
    \includegraphics[width=0.245\textwidth]{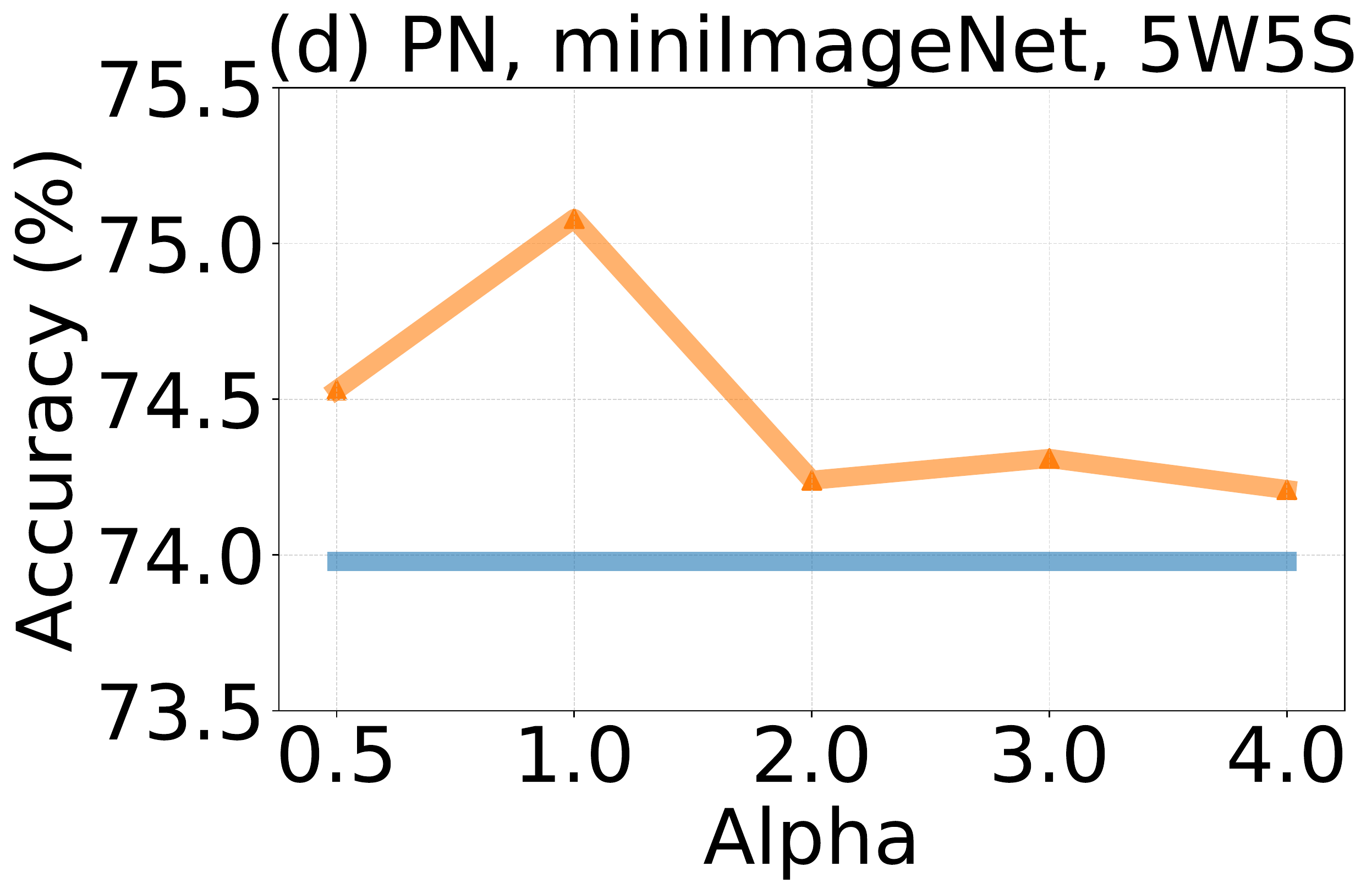}
    \\
    \includegraphics[width=0.245\textwidth]{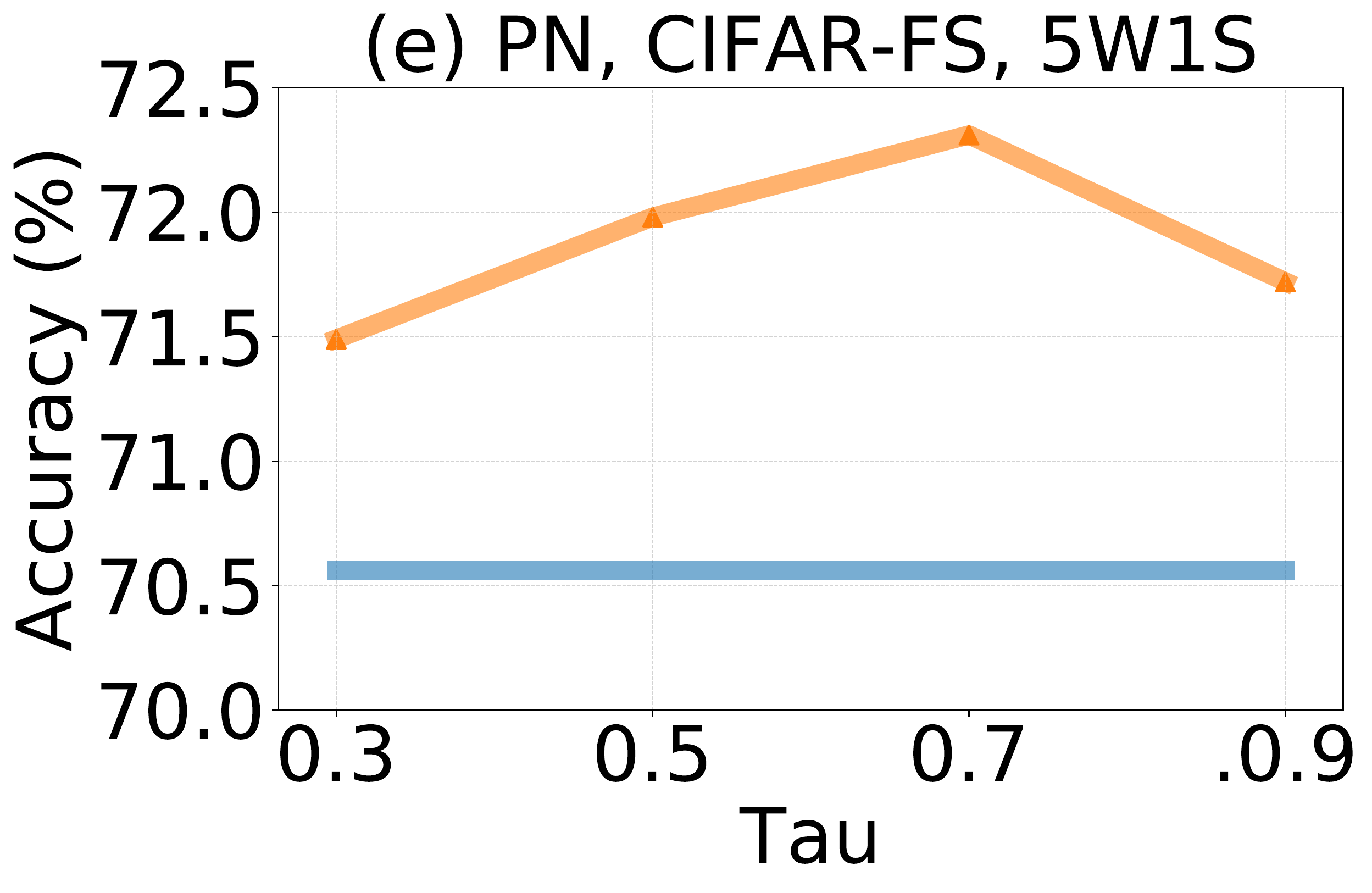}
    \includegraphics[width=0.245\textwidth]{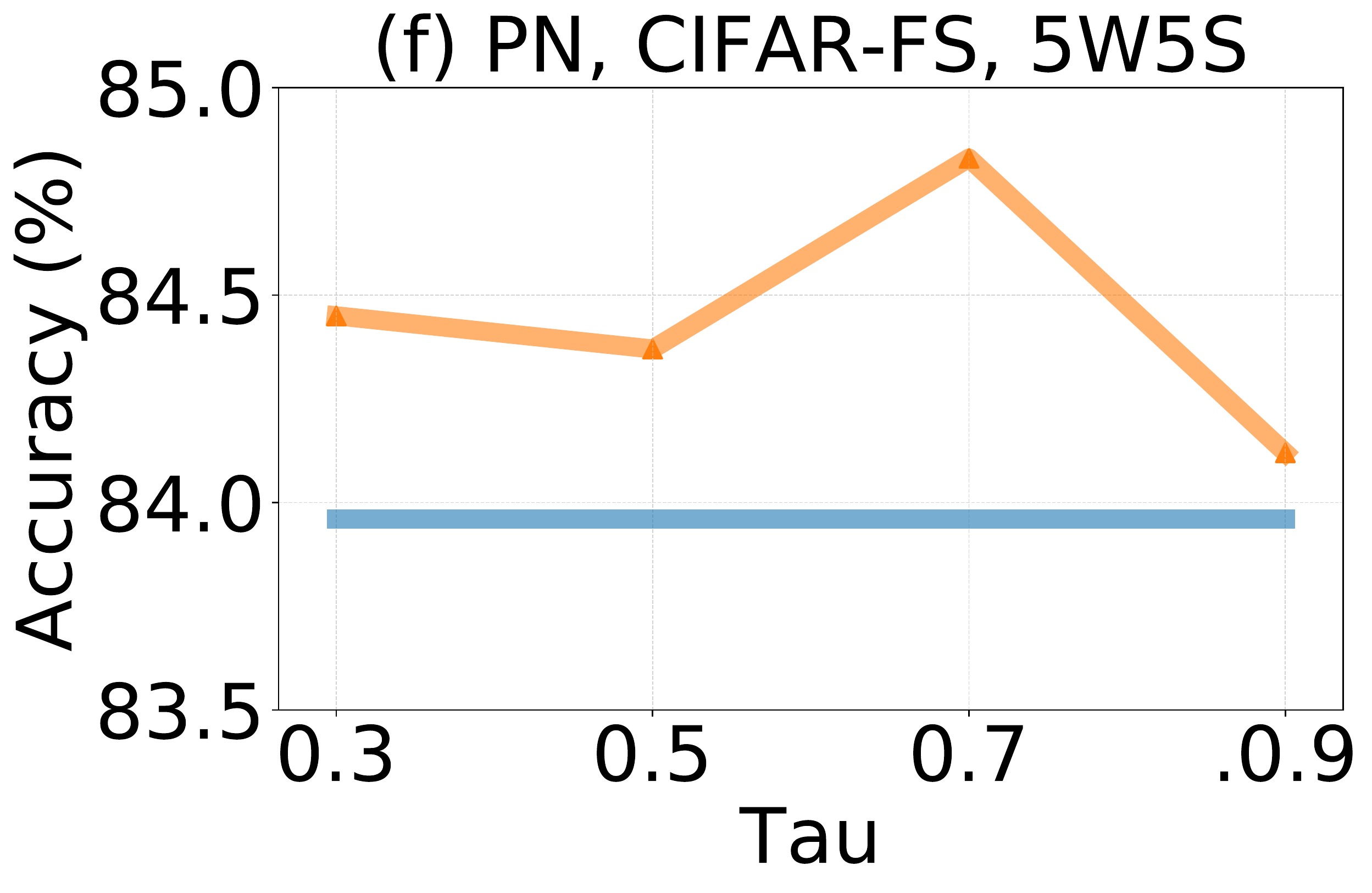}
    \includegraphics[width=0.245\textwidth]{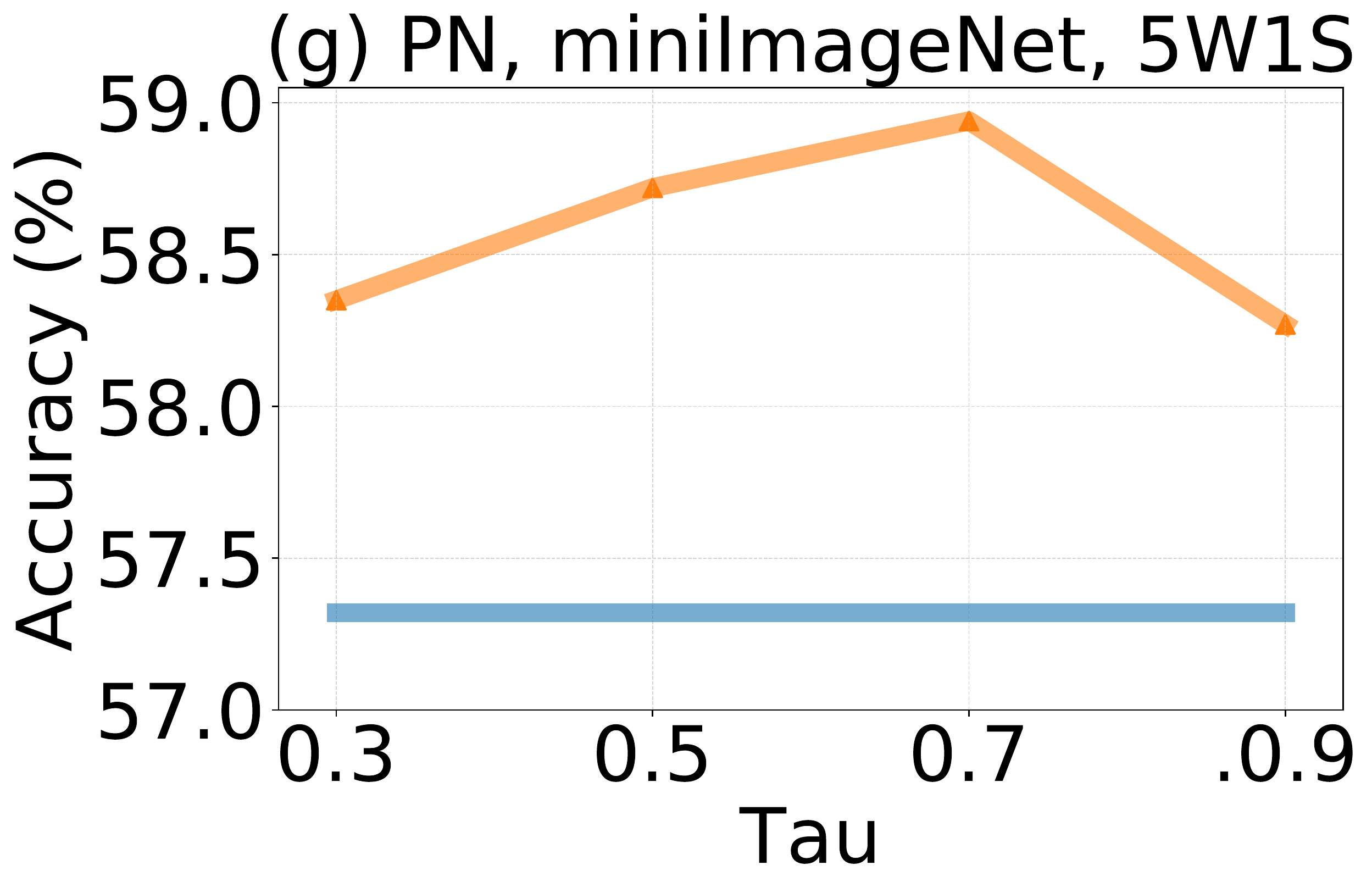}
     \includegraphics[width=0.245\textwidth]{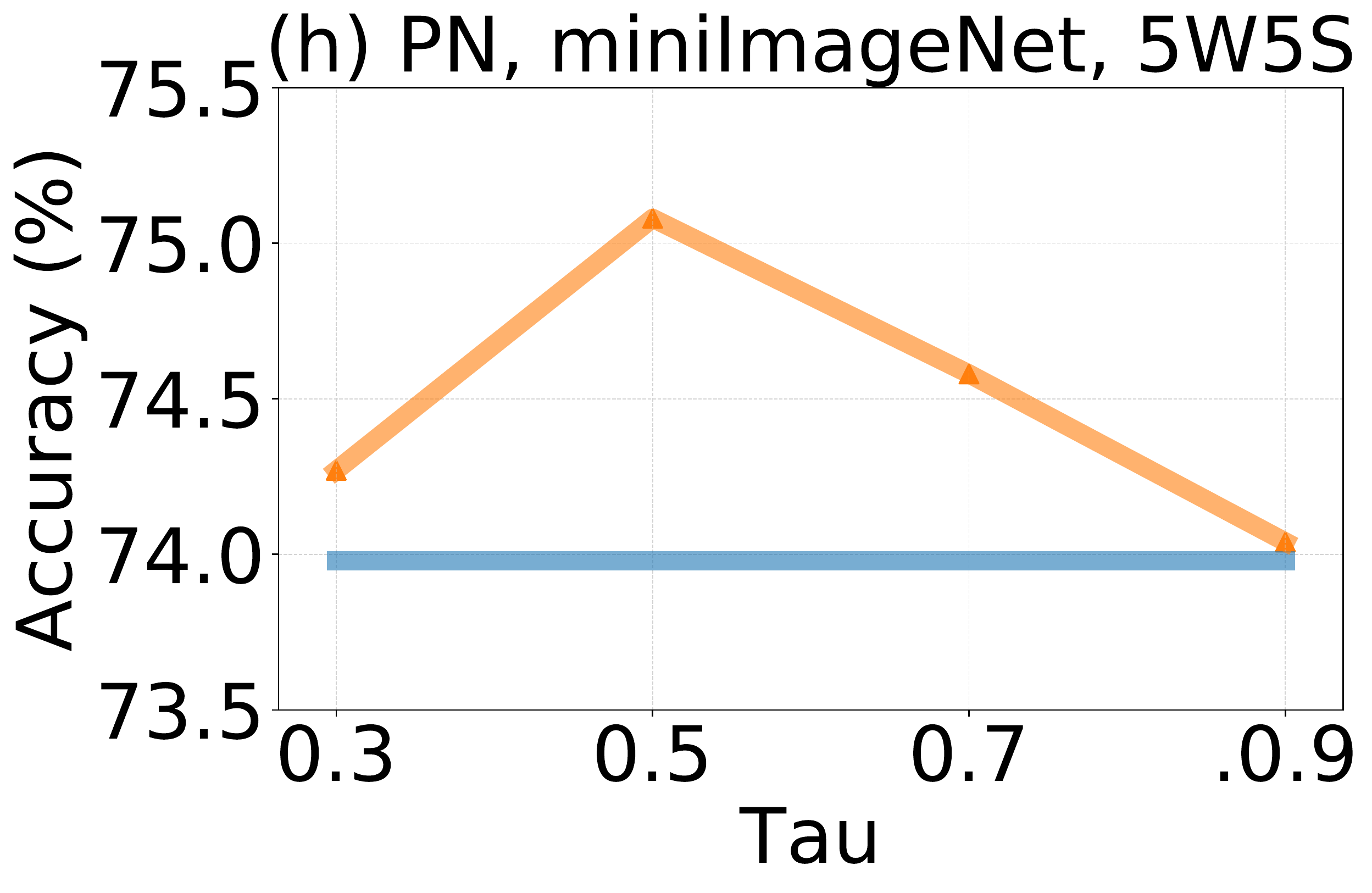}
     \vspace{0.2cm}
    \caption{Impact of hyperparameters $\alpha$ and $\tau$. First row  (a-d): we fix the discounting factor $\tau=0.5$ and tune the updating factor $\alpha$; Second row (e-g): we fix $\alpha=1$ and tune $\tau$.}
    \label{fig_hyper_parameters}
\end{figure*}

\subsubsection{Time Cost Analysis}
Table \ref{table_training_time} shows the time cost comparison between random sampling and gcp-sampling. We adopt PN on the CIFAR-FS dataset and report the average training time for each epoch, which includes task sampling, forward and backward propagation phases. We find that the time taken by gcp-sampling is comparable to the time taken by random-sampling. This is because the training time is dominated by the forward pass and backward pass and the cost of task generation and class-pair potential update is relatively small. Besides, using a deeper backbone significantly increases the time cost but reduces the ratio between gcp-sampling and random-sampling, since it only affects the forward pass and backward pass. Finally, increasing the number of ways would increase the time cost while increasing the number of shots will not. This is because the complexity of gcp-sampling scales linearly to the number of ways.

\begin{table*}[htpb]
    \caption{Time cost comparison between random sampling and gcp-sampling. All the experiments are conducted with PN on the CIFAR-FS dataset.}
    \centering
    \begin{tabular}{*1l*2c*2c*2c}
        \hline
        &random sampling &gcp-sampling &factor \\
        \hline
        5-way-1-shot, Conv-4         &235.4           &251.8        &1.070  \\
        5-way-1-shot, ResNet-12      &531.2           &554.6        &1.044  \\
        \hline
        5-way-5-shot, Conv-4         &342.2           &367.3        &1.073  \\
        5-way-10-shot, Conv-4        &471.4           &491.0        &1.042  \\
        5-way-15-shot, Conv-4        &617.2           &634.6        &1.028  \\
        \hline
        10-way-1-shot, Conv-4        &411.3           &451.7        &1.098  \\
        15-way-1-shot, Conv-4        &624.9           &723.5        &1.158  \\
        20-way-1-shot, Conv-4        &816.8           &992.5        &1.215  \\
        \hline
    \end{tabular}
    \label{table_training_time}
\end{table*}

\subsubsection{Visual analysis of adaptive task sampling.} 

To qualitatively characterize adaptive task sampling, we visualize the prototype of each class generated by the training procedure of PN with gcp-sampling and random sampling. We use the t-SNE \cite{Maaten2008VisualizingDU} method to convert the prototypes into two-dimensional vectors by preserving the cosine similarity between them. As shown in Figure \ref{fig_case_embedding_spacce}, the classes sampled by random sampling achieve better clustering results than gcp-sampling. This is because gcp-sampling tends to sample classes with highly overlapping embeddings, which is much more difficult to learn for meta-learner. 

\begin{figure}[!htb]
    \vspace{-0.2cm}
    \centering
    \includegraphics[width=0.48\textwidth]{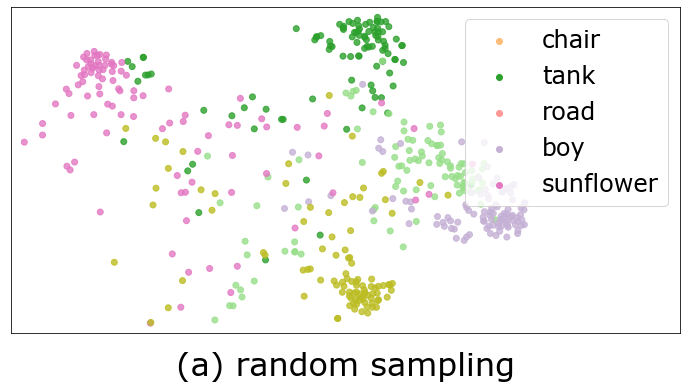}
    \includegraphics[width=0.48\textwidth]{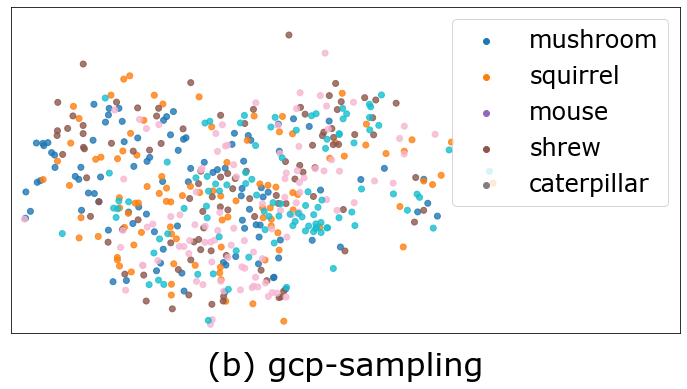}
    \vspace{0.2cm}
    \caption{Feature embedding of the classes sampled by (a) random sampling and (b) task adaptive sampling. The dimension reduction is performed based on all 64 training classes of CIFAR-FS, while we show only the 5 selected classes in each sub-figure for better visualization.}
    \vspace{-0.2cm}
    \label{fig_case_embedding_spacce}
\end{figure}

We also visualize the class-pair potentials constructed by gcp-sampling in Figure \ref{fig_class_correlation_heatmap}. We show 16 classes of CIFAR-FS, where the green and red colors denote the classes sampled by random sampling and gcp-sampling, respectively. We can see that the classes sampled by random sampling are often easier to distinguish, which leads to inefficient training, while the gcp-sampling tends to sample the classes that, when combined with other classes, display greater difficulty. We also randomly select some sampled images from each class for observation. As shown in Figure \ref{fig_instance_sampling_examples}, the classes sampled by random sampling do vary greatly (e.g., with unique shapes or colors) and are easier to recognize, while the classes sampled by gcp-sampling are visually more confusing (e.g., small animals or insects in the wild) and much more difficult to distinguish.

\begin{SCfigure}
\vspace{-0.1cm}
    \includegraphics[width=0.5\textwidth]{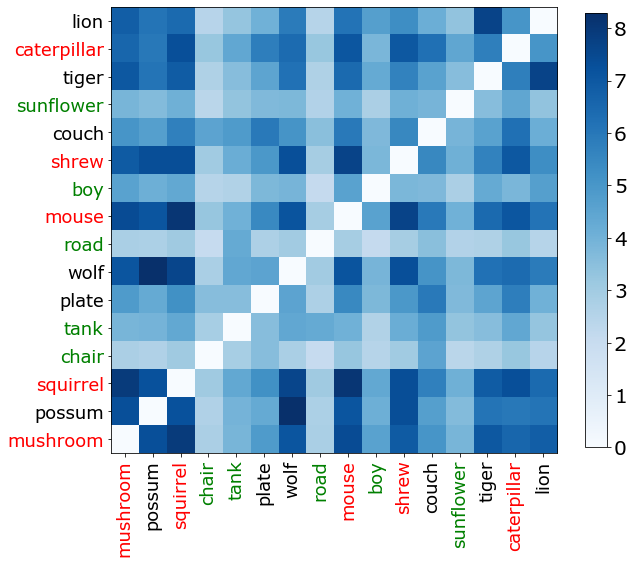}
    \caption{Correlation matrix w.r.t. class-pair potentials. Each element indicates the class-pair potential. The higher the correlation weight (i.e., the darker the color), the higher the probability of this two-class combination being sampled. The green and red colors denote the classes sampled by random sampling and adaptive sampling, respectively.\newline\newline}
    \label{fig_class_correlation_heatmap}
\end{SCfigure}

\begin{figure}[!htb]
    \centering
    \includegraphics[width=0.8\textwidth]{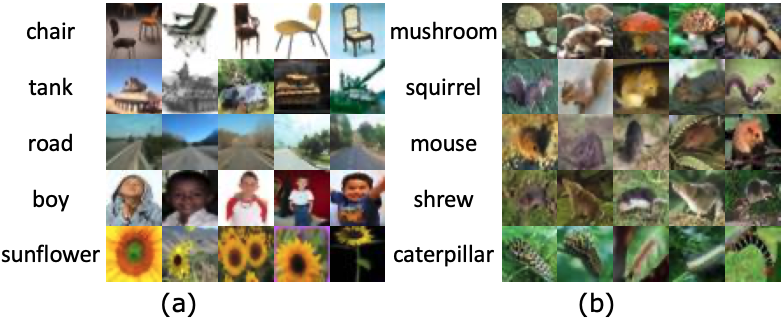}
    \caption{Sample images from classes by (a) random sampling and (b) gcp-sampling.}
    \label{fig_instance_sampling_examples}
\end{figure}

\section{Conclusion}
In this paper, we presented an adaptive task sampling method for meta-learning. Our results demonstrated that in meta-learning it is essential for the sampling process to be dependent on tasks, and the proposed method naturally models and exploits this dependence. We showed that the greedy class-pair based sampling method, integrated with PN, MetaOptNet-RR or MetaOptNet-SVM, could achieve competitive results. Furthermore, we demonstrated consistent improvement when integrating the proposed sampling method with different meta-learning methods. Finally, we explore and evaluate different sampling strategies for gcp-sampling, in which the hard class strategy consistently leads to more accurate results.

\subsection*{Acknowledgment} This research is supported by the National Research Foundation, Singapore under its AI Singapore Programme (AISG Award No: AISG-RP-2018-001). Any opinions, findings and conclusions or recommendations expressed in this material are those of the author(s) and do not reflect the views of National Research Foundation, Singapore.
\bibliographystyle{splncs04}
\bibliography{egbib}
\clearpage
\section*{Appendix}
\section{Theoretical Analysis}
The core of gcp-sampling is to adaptively sample tasks during meta-training. Hence, in this section, we theoretically analyze the advance of such a sampling method in terms of generalization bound. We first provide a generic generalization bound for task sampling. Then, we connect the generalization bound to the proposed task adaptive sampling (cp-sampling and gcp-sampling).

\subsection{The Generalization Bound for Task Sampling Distribution} 
Given a meta-training dataset $\sD_{tr}$ with a category set $\sC_{tr}$ and each class including $L$ images, we assume a sequence of different meta-training tasks $\sT=\{(\sS_1,\sQ_1),\dots,(\sS_{n_0},\sQ_{n_0})\}$. Each task is generated by first sampling $K$ classes $\sL^K\sim\sC_{tr}$ and then sampling $M$ and $N$ images per class. Therefore, we have $n_0=\binom{|\sC_{tr}|}{K}\left(\binom{L}{M+N}\binom{M+N}{M}\right)^K$ different tasks, where $\binom{i}{j}$ denotes the number of combinations of $j$ objects chosen from $i$ objects. 

Let $\ell(\theta,\sS,\sQ)$ denote the task loss w.r.t model parameter $\theta$ and task $(\sS,\sQ)$. The ultimate goal of meta-learning algorithm is to have low expected task error, i.e. $er(\theta)=\underset{\sS,\sQ}{\mathbb{E}}\ell(\theta,\sS,\sQ)$. Since the underlying task distribution is unknown, we approximate it by the empirical task error over the meta-training tasks $\sT$, i.e. $\hat{er}(\theta)=\frac{1}{n_0}\sum^{n_0}_{i=1}\ell(\theta,\sS_i,\sQ_i)$. By bounding the difference of the two, we obtain an upper bound on $er(\theta)$.

In the meta-learning framework, we formulate the episodic training algorithm as $A(\sT,\sigma)\rightarrow \theta$, which produces the model parameter $\theta$ based on $\sT$ and some hyperparameters $\sigma$. Similar to \cite{london2017pac}, we could view the randomized episodic training algorithm as a deterministic learning algorithm whose hyperparameters are randomized. In particular, the episodic training performs a sequence of updates, for $t=1,\dots,T$, in the following way,
\begin{align}
\theta_t \leftarrow U_t(\theta_{t-1},\sS_{i_t},\sQ_{i_t}),
\end{align}
where $U_t(\cdot)$ is an optimizer. It deals with a sequence of random task indices $\sigma=(i_1,\dots,i_T)$, sampled according to a distribution $P$ on hyperparameter space $\Sigma=\{1,\dots,n_0\}^T$. This can be viewed as drawing $\sigma\sim P$ based on $\sT$ first, and then executing a sequence of updates by running a deterministic algorithm $A(\sT,\sigma)$. Based on this, the expected task error and empirical task error are given by averaging over task distribution $P$, namely $er(P)= \underset{\theta\sim P}{\mathbb{E}}\underset{\sS,\sQ}{\mathbb{E}}\ell(\theta,\sS,\sQ)$ and $\hat{er}(P)=\underset{\theta\sim P}{\mathbb{E}}\frac{1}{n_0}\sum^{n_0}_{i=1}\ell(\theta,\sS_i,\sQ_i)$.  

The distribution on the hyperparameter space $\Sigma$ induces a distribution on hypothesis space. Then, we can find a direct connection between $\underset{\theta\sim P}{\mathbb{E}}\ell(\theta,\sS_i,\sQ_i)$ and the Gibbs loss, which has been studied extensively using PAC-Bayes analysis \cite{guedj2019primer,catoni2007pac,mcallester1999pac}. According to the Catoni's PAC-Bayes bound \cite{catoni2007pac}, we could derive a generalization bound w.r.t. adaptive task sampling distribution $Q$ on hyperparameter space $\Sigma$.
\begin{thm} \label{gb1} Let $P$ be some prior distribution over hyperparameter space $\Sigma$. Then for any $\delta\in(0,1]$, and any real number $c>0$, the following inequality holds uniformly for all posteriors distribution $Q$ with probability at least $1-\delta$,
\begin{align}
er(Q) \le \frac{c}{1-e^{-c}}\Big[\widehat{er}(Q)+\frac{KL(Q||P)+\log \frac{1}{\delta}}{n_0c}\Big].
\end{align}
\end{thm}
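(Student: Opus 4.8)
The plan is to recognize Theorem~\ref{gb1} as the linearized form of Catoni's PAC-Bayes bound \cite{catoni2007pac}, instantiated with the hyperparameter space $\Sigma$ playing the role of the ``hypothesis'' space, the posterior $Q$ and prior $P$ as the PAC-Bayes measures, and the $n_0$ enumerated tasks playing the role of the sample (so the effective sample size is $n_0$). The task loss $\ell(\theta,\sS_i,\sQ_i)$ is assumed to lie in $[0,1]$. I would establish the bound through the three standard PAC-Bayes ingredients, carried out in this order: a pointwise exponential-moment estimate for a fixed $\theta$, Markov's inequality to turn it into a high-probability statement, and a change-of-measure inequality to pass from the data-independent prior $P$ to an arbitrary posterior $Q$ uniformly.

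First I would record Catoni's convexity inequality: since $x\mapsto e^{-cx}$ is convex, on $[0,1]$ it lies below its chord, giving $e^{-c\ell}\le 1-(1-e^{-c})\ell$ for every $\ell\in[0,1]$. Taking expectation over a fresh task yields, for fixed $\theta$,
\begin{align}
\mathbb{E}_{\sS,\sQ}\big[e^{-c\,\ell(\theta,\sS,\sQ)}\big]\le 1-(1-e^{-c})\,er(\theta)\le e^{-(1-e^{-c})\,er(\theta)}.
\end{align}
Factoring the moment over the $n_0$ tasks and writing $\sum_i \ell(\theta,\sS_i,\sQ_i)=n_0\,\widehat{er}(\theta)$, I would define $f(\theta)=n_0(1-e^{-c})\,er(\theta)-c\,n_0\,\widehat{er}(\theta)$ and deduce $\mathbb{E}_{\sT}\big[e^{f(\theta)}\big]\le 1$, hence $\mathbb{E}_{\sT}\mathbb{E}_{\theta\sim P}\big[e^{f(\theta)}\big]\le 1$ because $P$ does not depend on $\sT$. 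Markov's inequality then gives, with probability at least $1-\delta$ over $\sT$, the bound $\mathbb{E}_{\theta\sim P}\big[e^{f(\theta)}\big]\le 1/\delta$. Finally, the Donsker--Varadhan change-of-measure inequality $\mathbb{E}_{\theta\sim Q}[f(\theta)]\le KL(Q\|P)+\log\mathbb{E}_{\theta\sim P}[e^{f(\theta)}]$ holds simultaneously for all $Q$; combining it with the previous bound and substituting the definition of $f$ gives
\begin{align}
n_0(1-e^{-c})\,er(Q)\le c\,n_0\,\widehat{er}(Q)+KL(Q\|P)+\log\tfrac{1}{\delta},
\end{align}
and dividing by $n_0(1-e^{-c})$ produces exactly the claimed inequality.

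The step I expect to be the main obstacle is the factorization of the exponential moment across the $n_0$ tasks, which I invoked as if the per-task losses were independent. They are not: all tasks are assembled from the same pool of $|\sC_{tr}|\cdot L$ images, so distinct tasks share images and their losses are statistically dependent. To make the argument rigorous I would instead view $\widehat{er}(\theta)$ as a deterministic symmetric function of the underlying i.i.d.\ images and control the relevant exponential moment by a bounded-differences (McDiarmid-type) argument with the correct effective sample size, rather than by a naive product; alternatively, one restricts the generating process to draw tasks i.i.d., in which case $n_0$ is read as the number of independently sampled tasks and the factorization is literally valid. The remaining ingredients---the convexity bound, Markov's inequality, and the change of measure---are routine and require only that the loss be bounded in $[0,1]$ and that $P$ be fixed independently of the meta-training tasks.
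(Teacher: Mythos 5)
Your proposal is correct and takes essentially the same route as the paper: the paper gives no detailed derivation at all, simply instantiating Catoni's PAC-Bayes bound with the hyperparameter space $\Sigma$ as the hypothesis space, $P$ and $Q$ as prior and posterior, and the $n_0$ tasks as the sample, which is precisely the argument (chord/convexity inequality, exponential moment, Markov, Donsker--Varadhan change of measure) that you write out in full. Your flagged concern about the factorization step is well placed---the paper silently treats the $n_0$ enumerated tasks as an i.i.d.\ sample even though they are assembled from a shared pool of images---but since the paper's own statement rests on exactly this implicit assumption, your write-up is, if anything, more careful than the original.
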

Theorem \ref{gb1} indicates that the expected task error $er(Q)$ is upper bounded by the empirical task error plus a penalty $KL(Q\|P)$. Since the bound holds uniformly for all $Q$, it also holds for data-dependent $Q$. By choosing $Q$ that minimizes the bound, we obtain a data-dependent task distribution with generalization guarantees. 

\subsection{Connection to cp-sampling (gcp-sampling)}
According to Theorem \ref{gb1}, to improve the generalization performance, the posterior sampling distribution  $Q$ should put its attention on the important task which is valuable for reducing empirical error. On the other hand, the posterior sampling distribution $Q$ should be close to the prior  $P$ to control the divergence penalty. Moreover, the posterior is required to dynamically adapt to episodic training, which is a dynamic conditional distribution on the previous iteration $Q^t(i) \triangleq Q^t(i_t=i|i_1,\dots,i_{t-1})$. Therefore, we choose the task sampling distribution at $t+1$ by maximizing the expected utility over tasks while minimizing the KL penalty w.r.t. a reference distribution. It can be formulated as the following optimization problem:
\begin{align}
\label{obj_task}
\max_{Q^{t+1}\in \triangle^{n_0}}\sum^{n_0}_{i=1}Q^{t+1}(i)f(\theta_t,\sS_i,\sQ_i)-\frac{1}{\alpha}KL(Q^{t+1}\|(Q^t)^\tau),
\end{align}
where $Q^0$ is a uniform distribution, $\alpha$ and $\tau$ are hyperparameters that control the impact of current update and previous updates, $f(\theta_t,\sS_i,\sQ_i)$ denotes the utility function of the chosen task and current model parameter. However, the two-level sampling for generating task makes $n_0$ quite large ($n_0=\binom{|\sC_{tr}|}{K}\left(\binom{L}{M+N}\binom{M+N}{M}\right)^K$). It is infeasible to maintain a distribution $Q$ on $\{1,\dots,n_0\}$. Therefore, we propose to sample $K$ classes $\sL_K$ for each task and adopt uniform sampling to generate the support set and query set for each class, respectively. Then, we consider the following optimization problem w.r.t category set $\sL_K^{t+1}$:
\begin{align}
\label{obj_c}
\max_{p(\sL_K^{t+1}) \in \triangle^{n_1}}\sum p(\sL_K^{t+1})\underset{\sS,\sQ}{\mathbb{E}}f(\theta_t,\sS,\sQ)-\frac{1}{\alpha}KL(p(\sL_K^{t+1})\|(p(\sL_K^{t}))^\tau),
\end{align}
where $n_1=\binom{|\sC_{tr}|}{K}$ and 	$(\sS,\sQ)$ are the support set and the query set constructed by randomly sampling from category set $\sL_K^{t+1}$. We can solve this problem by using the Lagrange multipliers, which yields:
\begin{align}
p^\star(\sL_K^{t+1})\propto (p(\sL_K^{t}))^\tau e^{\alpha \underset{\sS,\sQ}{\mathbb{E}}f(\theta_t,\sS,\sQ)}.
\end{align}
It is impractical to compute the expectation of utility function over $\sS$ and $\sQ$ and all the possibilities of $\sL_K$, so we approximate the above solution by only computing the utility function on last sampled support set $\sS^t$ and query set $\sQ^t$ and updating the probability for the last sampled category set $\sL_K^t$. Since $p(\sL^{t+1}_K)$ is proportional to the product of class-pair potentials $\prod_{(i,j)\subset \sL^{t+1}_K} C^t(i,j)$. Substituting $\bar{p}((i,j)|\sS^t,\sQ^t)$ into the utility function, we obtain the updating rule for class-pair potentials:
\begin{align}
C^{t+1}(i,j)\leftarrow (C^{t}(i,j))^\tau e^{\alpha^{\frac{1}{n_2}}\bar{p}((i,j)|\sS,\sQ)},
\end{align}
where $n_2=\binom{K}{2}$. This derives the updating rule for the proposed adaptive task sampling methods(cp-sampling and gcp-sampling).

\section{More Experimental Results}
\subsection{Evaluation on tieredImageNet Dataset}
To further validate the effectiveness of gcp-sampling. We evaluate it on \textbf{tieredImageNet}. This dataset \cite{ren2018meta} is a larger subset of ILSVRC-12,
which contains 608 classes and 779,165 images totally. As in \cite{ren2018meta}, we split it into 351, 97, and 160 classes for training, validation, and test, respectively. The comparative results are shown in Table \ref{table_sota_tieredImageNet}.
\begin{table}[htb]
    \caption{Average 5-way, 1-shot and 5-shot classification accuracies (\%) on the tieredImageNet dataset.}
    \centering
    \begin{tabular}{*1l*1c*2c*2c}
        \hline\noalign{\smallskip}
        ~                                            &Backbone   &5way-1shot      &5way-5shot \\
        \noalign{\smallskip}\hline\noalign{\smallskip}
        Relation Network \cite{sung2018learning}     &CONV-4     &$54.48 \pm 0.93$  &$71.32 \pm 0.78$ \\
        PN \cite{snell2017prototypical}        &CONV-4     &$53.31 \pm 0.89$  &$72.69 \pm 0.74$ \\
        MAML \cite{finn2017model}              &CONV-4     &$51.57 \pm 1.81$  &$70.30 \pm 1.75$ \\
        TPN \cite{liu2018learning}             &CONV-4     &$59.91  \pm0.94$  &$73.30 \pm 0.75        $ \\
        TapNet \cite{yoon2019tapnet}           &ResNet-12     &$63.08 \pm 0.15$  &$80.26 \pm 0.12$ \\
        PN \cite{lee2019meta}                 &ResNet-12  &$61.74 \pm 0.77$  &$80.00 \pm 0.55$ \\
        PN with gcp-sampling 
        &ResNet-12  &$\textbf{62.80} \pm 0.73$  &$\textbf{80.52} \pm 0.56$ \\ 
        MetaOptNet-RR \cite{lee2019meta}            &ResNet-12  &$65.36 \pm 0.71$  &$81.34 \pm 0.52$ \\
        MetaOptNet-RR with gcp-sampling
        &ResNet-12  &$\textbf{66.21} \pm 0.73$  &$\textbf{81.93} \pm 0.48$ \\
        MetaOptNet-SVM \cite{lee2019meta}            &ResNet-12 &$65.99 \pm 0.72$  &$81.56 \pm 0.53$ \\
        MetaOptNet-SVM with gcp-sampling            &ResNet-12  &$\textbf{66.92} \pm 0.72$  &$\textbf{82.10} \pm 0.52$ \\
        \noalign{\smallskip}\hline
    \end{tabular}
    \label{table_sota_tieredImageNet}
\end{table}
\subsection{Evolution of Class-Pair Potentials}
We demonstrate the evolution of class-pair potentials about 16 classes of CIFAR-FS dataset. We plot the evolving correlation matrix w.r.t. class-pair potentials in the first $600$ iterations at the interval of every $40$ iterations. By observing Figure \ref{fig_heatmaps}, we can find that gcp-sampling is initialized with uniform sampling and gradually put its attention to the valuable class-pairs.
\begin{figure*}[!htb]
    \includegraphics[width=0.19\textwidth]{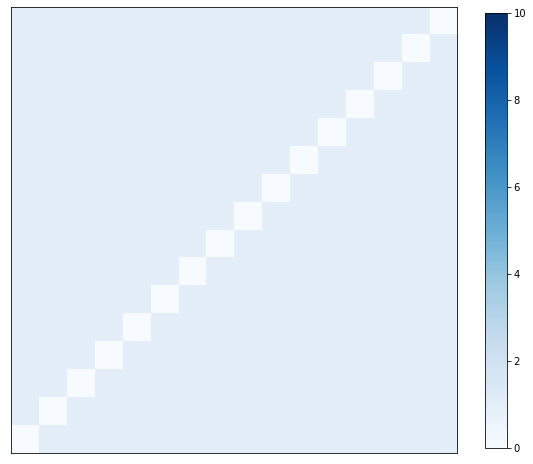}
    \includegraphics[width=0.19\textwidth]{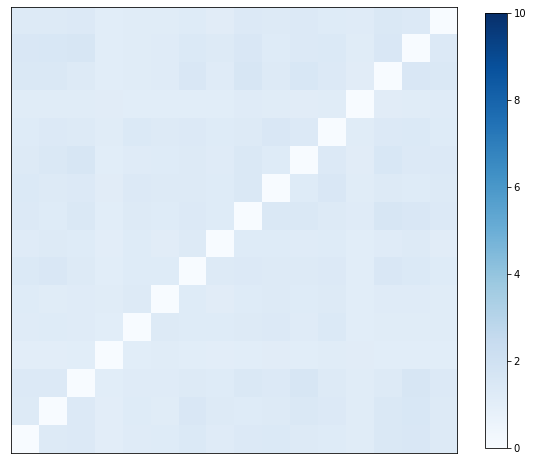}
    \includegraphics[width=0.19\textwidth]{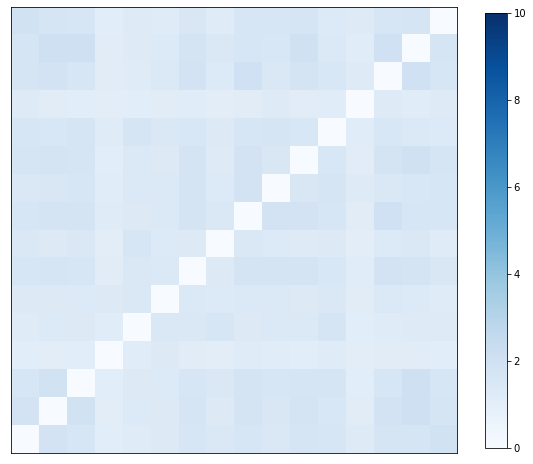}
    \includegraphics[width=0.19\textwidth]{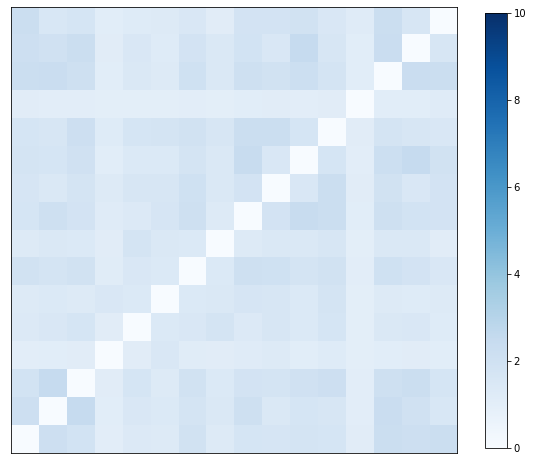}
    \includegraphics[width=0.19\textwidth]{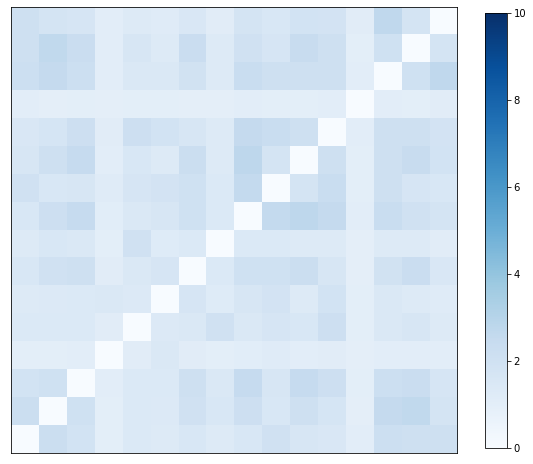}
    \\
    \includegraphics[width=0.19\textwidth]{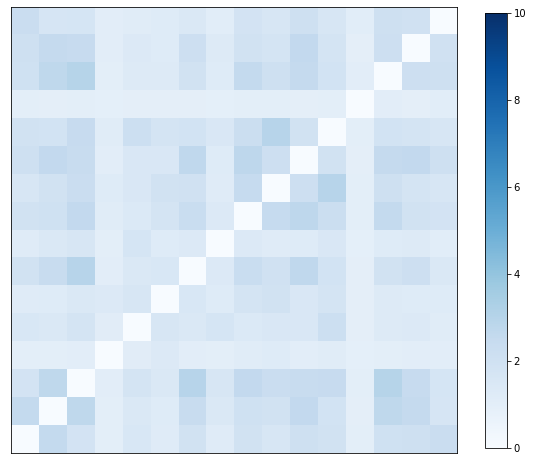}
    \includegraphics[width=0.19\textwidth]{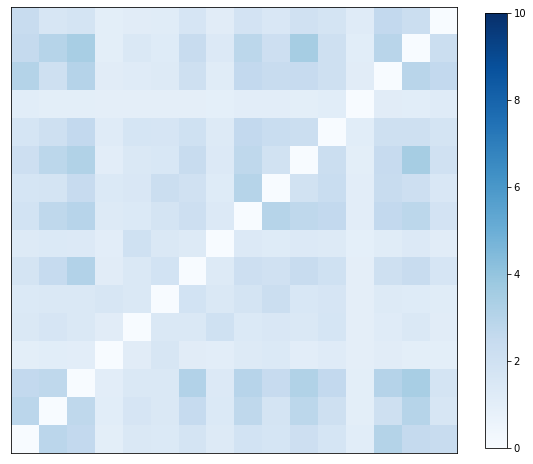}
    \includegraphics[width=0.19\textwidth]{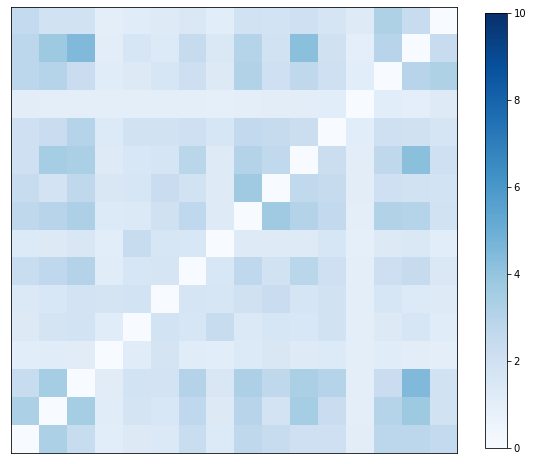}
    \includegraphics[width=0.19\textwidth]{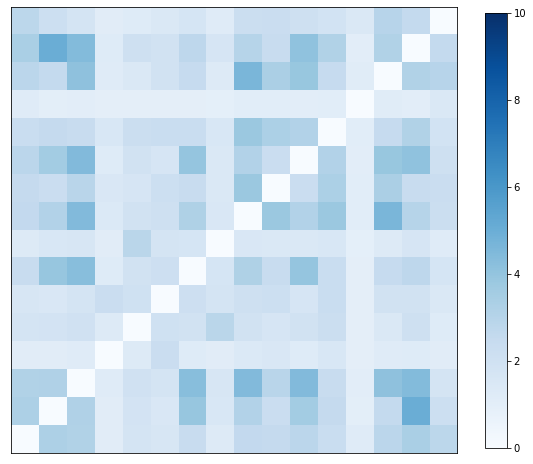}
    \includegraphics[width=0.19\textwidth]{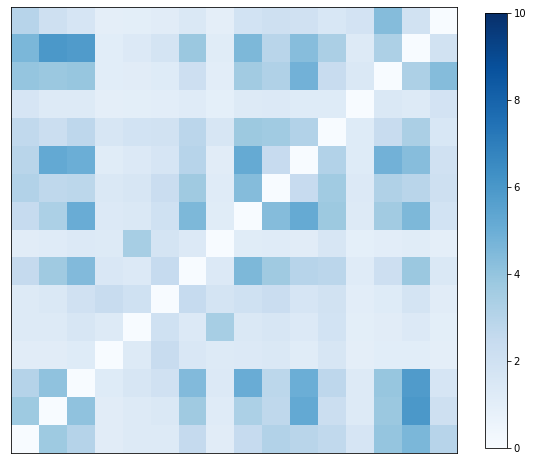}
    \\
    \includegraphics[width=0.19\textwidth]{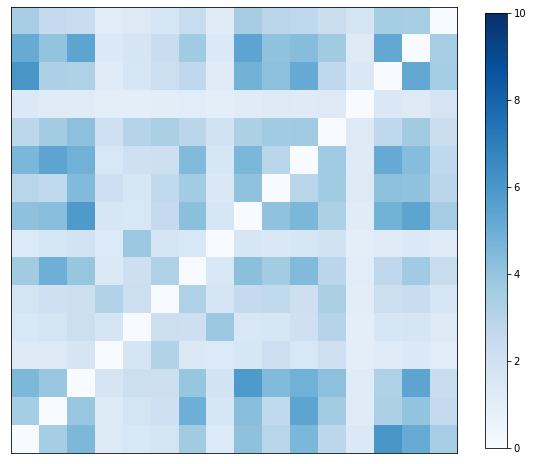}
    \includegraphics[width=0.19\textwidth]{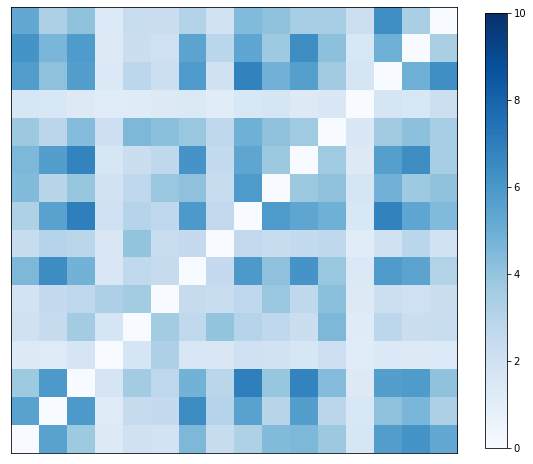}
    \includegraphics[width=0.19\textwidth]{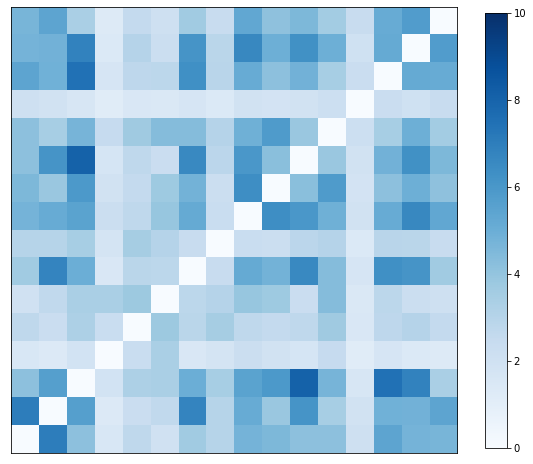}
    \includegraphics[width=0.19\textwidth]{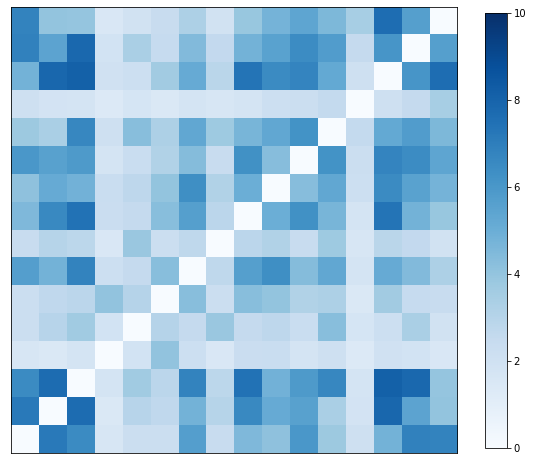}
    \includegraphics[width=0.19\textwidth]{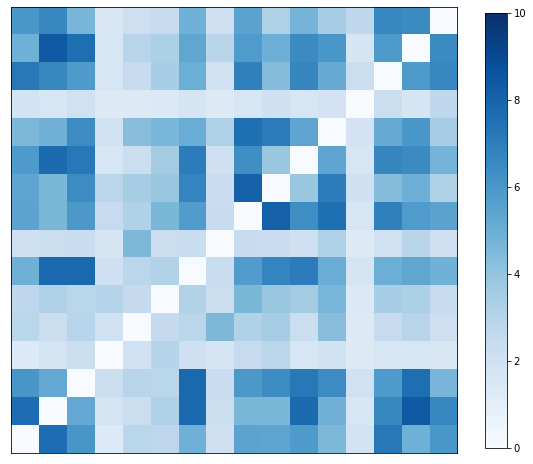}
    \caption{Correlation matrix w.r.t. class-pair potentials for $16$ classes of CIFAR-FS dataset. Each element indicates the class-pair potential. We plot the evolving correlation matrix of the first $600$ iterations at the interval of every $40$ iterations.}
    \vspace{-0.3cm}
    \label{fig_heatmaps}
\end{figure*}



%
%
\end{document}